\definecolor{ntnublue}{RGB}{0,80,158}
\definecolor{ntnublue_light}{RGB}{109,183,102}
\theoremstyle{plain}
\newtheorem*{theorem*}{Theorem}
\newtheorem{theorem}{Theorem}[section]
\newtheorem{proposition}[theorem]{Proposition}
\theoremstyle{definition}
\newtheorem{definition}[theorem]{Definition}
\theoremstyle{remark}
\newcommand{\R}{\mathbb{R}}
\newcommand{\V}{\text{Var}}
\newcommand{\hlf}{\frac{1}{2}}
\newcommand{\one}{\mathbbm{1}}
\newcommand{\sqa}{\frac{\sqrt{21}}{98}}
\newcommand{\sqb}{\frac{7\sqrt{21}}{128}}
\newcommand{\sqd}{\frac{\sqrt{21}}{14}}
\newcommand{\fa}{\frac{1}{14}}
\title{Learning Dynamical Systems from Noisy Data with Inverse-Explicit Integrators}
\author{%
  Håkon Noren$^{\dagger}$ \quad Sølve Eidnes$^{\ddagger}$ \quad Elena Celledoni$^{\dagger}$  \\
$^\dagger$ Norwegian University of Science and Technology \quad $^\ddagger$ SINTEF Digital\\
\texttt{\{hakon.noren,elena.celledoni\}@ntnu.no}\\
\texttt{\{solve.eidnes\}@sintef.no}\\
}
\begin{document}

\maketitle

\begin{abstract}

  We introduce the mean inverse integrator (MII), a novel approach to increase the accuracy when training neural networks to approximate vector fields of dynamical systems from noisy data. This method can be used to average multiple trajectories obtained by numerical integrators such as Runge--Kutta methods. We show that the class of mono-implicit Runge--Kutta methods (MIRK) has particular advantages when used in connection with MII. When training vector field approximations, explicit expressions for the loss functions are obtained when inserting the training data in the MIRK formulae, unlocking symmetric and high-order integrators that would otherwise be implicit for initial value problems. The combined approach of applying MIRK within MII yields a significantly lower error compared to the plain use of the numerical integrator without averaging the trajectories. This is demonstrated with experiments using data from several (chaotic) Hamiltonian systems. Additionally, we perform a sensitivity analysis of the loss functions under normally distributed perturbations, supporting the favorable performance of MII.

\end{abstract}

\section{Introduction}
\label{introduction}

Recently, many deep learning methodologies have been introduced to increase the efficiency and quality of scientific computations \cite{raissi2019physics,rackauckas2020universal,chen2018neural,li2020fourier}. In physics-informed machine learning, deep neural networks are purposely built to enforce physical laws. As an example, Hamiltonian neural networks (HNNs) \cite{HamiltonianNeuralNetworks} aim at learning the Hamiltonian function from temporal observations. The Hamiltonian formalism was derived from classical mechanics for modeling a wide variety of physical systems. The temporal evolution of such systems is fully determined when the Hamiltonian function is known, and it is characterized by geometric properties such as the preservation of energy, the symplectic structure and the time-reversal symmetry of the flow \cite{GoldsteinHerbert2014Cm,leimkuhler_reich_2005}.

Numerical integrators that compute solutions preserving such properties are studied in the field of geometric numerical integration \cite{leimkuhler_reich_2005,hairer2006geometric}. Thus, deep learning, classical mechanics and geometric numerical integration are all relevant to the development of HNNs. In this work, we try to identify the optimal strategy for using numerical integrators when constructing loss functions for HNNs that are trained on noisy and sparse data.

Generally, we aim at learning autonomous systems of first-order ordinary differential equations (ODE)
\begin{equation}\label{eq:ode}
\frac{d}{dt} y = f(y(t)), \quad y : [0,T] \rightarrow \R^n.
\end{equation}
In the traditional setting, solving an initial value problem (IVP) means computing approximated solutions $y_n \approx y(t_n)$ when the vector field $f(y)$ and an initial value $y(t_0) = y_0$ are known. The focus of our study is the corresponding inverse problem; assuming knowledge of multiple noisy samples of the solution, $S_N = \{\tilde y_n\}_{n=0}^N$, the aim is to approximate the vector field $f$ with a neural network model $f_\theta$. 
We will assume that the observations originate from a (canonical) Hamiltonian system, with a Hamiltonian $H : \R^{2d} \rightarrow \R$, where the vector field is given by
\begin{equation}
  \label{hamiltonian system}
    f(y) = J\nabla H(y(t)), \quad J := \begin{bmatrix}
      0 & I\\
      -I & 0
    \end{bmatrix} \in \R^{2d \times 2d}.
  \end{equation}
This allows for learning the Hamiltonian function directly by setting $f_{\theta}(y) = J\nabla H_{\theta}(y)$, as proposed initially in \cite{HamiltonianNeuralNetworks}. 

Recently, many works highlight the benefit of using symplectic integrators when learning Hamiltonian neural networks \cite{offen2022symplectic,Chen2020Symplectic,DeepHamiltoniannetworksbasedonsymplecticintegrators,SymplecticLearningforHamiltonianNeuralNetworks}. Here, we study what happens if, instead of using symplectic methods, efficient and higher-order MIRK methods are applied for inverse problems. We develop different approaches and apply them to learn highly oscillatory and chaotic dynamical systems from noisy data. The methods are general, they are not limited to separable Hamiltonian systems, and could indeed be used to learn any first-order ODE. However, we focus our study on Hamiltonian systems, in order to build on the latest research on HNNs. Specifically, we compare our methods to the use of symplectic integrators to train Hamiltonian neural networks. Our contributions can be summarized as follows: 

\begin{itemize}
  \item We introduce the \textbf{mean inverse integrator} (MII), which efficiently averages trajectories of MIRK methods in order to increase accuracy when learning vector fields from noisy data (Definition \ref{mii_equation}).
  \item We present an \textbf{analysis of the sensitivity} of the loss function to perturbations giving insight into when the MII method yields improvement over a standard one-step scheme (Theorem \ref{thm:propagation of noise mirk}).
  \item We show that \textbf{symplectic MIRK} methods have at most order  $p = 2$ (Theorem \ref{max_ord_symp}). Particularly, the second-order implicit midpoint method is the symplectic MIRK method with minimal number of stages.
\end{itemize}
Finally, numerical experiments on several Hamiltonian systems benchmark MII against one-step training and symplectic recurrent neural networks (SRNN) \cite{Chen2020Symplectic}, which rely on the Störmer--Verlet integrator. The structural difference between these three approached is presented in Figure \ref{training_structure}. Additionally, we demonstrate that substituting Störmer--Verlet with the classic Runge--Kutta method (RK$4$) in the SRNN framework yields a significant reduction in error and allows accurate learning of non-separable Hamiltonian systems.

\section{Related work}
Hamiltonian neural networks was introduced in \cite{HamiltonianNeuralNetworks}.  The numerical integration of Hamiltonian ODEs and the preservation of the symplectic structure of the ODE flow under numerical discretization have been widely studied over several decades \cite{hairer2006geometric, leimkuhler_reich_2005}. The symplecticity property is key and could inform the neural network architecture \cite{SympNetsIntrinsicstructure-preservingsymplecticnetworksforidentifyingHamiltoniansystems} or guide the choice of numerical integrator, yielding a theoretical guarantee that the learning target is actually a (modified) Hamiltonian vector field \cite{Inversemodifieddifferentialequationsfordiscoveryofdynamics,offen2022symplectic}, building on the backward error analysis framework \cite{hairer2006geometric}. Discrete gradients is an approach to numerical integration that guarantees exact preservation of the (learned) Hamiltonian, and an algorithm for training Hamiltonian neural networks using discrete gradient integrators is developed in \cite{DeepEnergy-BasedModelingofDiscrete-TimePhysics} and extended to higher order in \cite{eidnes2022order}.

Since we for the inverse problem want to approximate the time-derivative of the solution, $f$, using only $\tilde y_n$, we need to use a numerical integrator when specifying the neural network loss function. For learning dynamical systems from data, explicit methods such as RK$4$ are much used \cite{HamiltonianNeuralNetworks,celledoni23lho,sanchez-gonzalezHamiltonianGraphNetworks2019}. However, explicit methods cannot in general preserve time-symmetry or symplecticity, and they often have worse stability properties compared to implicit methods \cite{wanner1996solving}. Assuming that the underlying Hamiltonian is separable allows for explicit integration with the symplectic Störmer--Verlet method, which is exploited in \cite{Chen2020Symplectic,liang_stiffness-aware_2022}. Symplecticity could be achieved without the limiting assumption of separability by training using the implicit midpoint method \cite{SymplecticLearningforHamiltonianNeuralNetworks}. As pointed out in \cite{SymplecticLearningforHamiltonianNeuralNetworks}, this integrator could be turned into an explicit method in training by inserting sequential training data $\tilde y_n$ and $\tilde y_{n+1}$. In fact, the MIRK class \cite{cash1975class,burrage1994order} contains all Runge--Kutta (RK) methods (including the midpoint method) that could be turned into explicit schemes when inserting the training data. This is exploited in \cite{noren2023learning}, where high-order MIRK methods are used to train HNNs, achieving accurate interpolation and extrapolation of a single trajectory with large step size, few samples and assuming zero noise. 

The assumption of noise-free data limits the potential of learning from physical measurements or applications on data sets from industry. This issue is addressed in \cite{Chen2020Symplectic}, presenting symplectic recurrent neural networks (SRNN). Here, Störmer--Verlet is used to integrate multiple steps and is combined with initial state optimization (ISO) before computing the loss. ISO is applied after training $f_{\theta}$ a given number of epochs and aims at finding the optimal initial value $\hat y_0$, such that the distance to the subsequent observed points $\tilde y_1,\dots,\tilde y_N$ is minimized when integrating over $f_{\theta}$. While \cite{Chen2020Symplectic} is limited by only considering separable systems, \cite{sharmaBayesianIdentificationNonseparable2022} aims at identifying the optimal combination of third-order polynomial basis functions to approximate a cubic non-separable Hamiltonian from noisy data, using a Bayesian framework.

\section{Background on numerical integration}
Some necessary and fundamental concepts on numerical integration and the geometry of Hamiltonian systems are presented below to inform the discussion on which integrators to use in inverse problems. Further details could be found in Appendix \ref{more_on_num_int}.

\textbf{Fundamental concepts:}
An important subclass of the general first-order ODEs \eqref{eq:ode} is the class of Hamiltonian systems, as given by \eqref{hamiltonian system}. Often, the solution is partitioned into the coordinates $y(t) = [q(t),p(t)]^T$, with $q(t),p(t) \in \R^d$. A separable Hamiltonian system is one where the Hamiltonian could be written as the sum of two scalar functions, often representing the kinetic and potential energy, that depends only on $q$ and $p$ respectively, this means we have $H(q,p) = H_1(q) + H_2(p)$.

The $h$ flow of an ODE is a map $\varphi_{h,f}:\mathbb{R}^n\rightarrow \mathbb{R}^n$ sending an initial value $y(t_0)$ to the solution of the ODE at time $t_0 + h$, given by
$
  \varphi_{h,f}(y(t_0)) := y(t_{0} + h)
$.
A numerical integration method $\Phi_{h,f}:\mathbb{R}^n\rightarrow \mathbb{R}^n$ is a map approximating the exact flow of the ODE, 
so that
\[
y(t_{1})\approx y_{1} = \Phi_{h,f}(y_0).
\]
Here, $y(t_n)$ represents the exact solution and we denote with $y_n$ the approximation at time $t_n = t_0 + n h$. It should be noted that the flow map satisfies the following group property:
\begin{equation}
  \label{group_flow}
  \varphi_{h_1,f} \circ \varphi_{h_2,f}\big (y(t_0 )\big) = \varphi_{h_1,f} \big (y(t_0 + h_2 )\big) = \varphi_{h_1 + h_2,f}(y(t_0)).
\end{equation}
In other words, a composition of two flows with step sizes $h_1,h_2$ is equivalent to the flow map over $f$ with step size $h_1 + h_2$. This property is not shared by numerical integrators for general vector fields. The order of a numerical integrator $\Phi_{h,f}$ characterizes how the error after one step depends on the step size $h$ and is given by the integer $p$ such that the following holds:
\[
\| y_{1} - y(t_{0} + h) \| = \| \Phi_{h,f}(y_0) - \varphi_{h,f}(y(t_0)) \| = \mathcal O (h^{p+1}).
\]

\textbf{Mono-implicit Runge--Kutta methods:} Given vectors $b,v\in\R^s$ and a strictly lower triangular matrix $D \in \R^{s\times s}$, a MIRK method is a Runge--Kutta method where $A = D + vb^T$ \cite{vanBokhoven1980efficient, Cash1982mono} and we assume that $[A]_{ij} = a_{ij}$ is the stage-coefficient matrix. This implies that the MIRK method can be written on the form
\begin{equation}
\begin{aligned}
    \label{mirk-stages}
        y_{n+1} &= y_n + h\sum_{i=1}^s b_i k_i,   \\
    k_i &= f\big (y_n + v_i(y_{n+1} - y_n) + h\sum_{j=1}^s d_{ij} k_j \big).
  \end{aligned} 
\end{equation}
Specific MIRK methods and further details on Runge--Kutta schemes is discussed in Appendix \ref{more_on_mirk}.

\textbf{Symplectic methods:} The flow map of a Hamiltonian system is symplectic, meaning that it's Jacobian $\Upsilon_{\varphi} := \frac{\partial}{\partial y} \varphi_{h,f}(y)$ satisfies $\Upsilon_{\varphi} ^T J \Upsilon_{\varphi} = J$, where $J$ is the same matrix as in \eqref{hamiltonian system}. As explained in \cite[Ch. VI.2]{hairer2006geometric}, this is equivalent to the preservation of a projected area in the phase space of $[q,p]^T$. Similarly, a numerical integrator is symplectic if its Jacobian $\Upsilon_{\Phi} := \frac{\partial}{\partial y_n} \Phi_{h,f}(y_n)$ satisfies $\Upsilon_{\Phi} ^T J \Upsilon_{\Phi} = J$. It is possible to prove \cite[Ch. VI.4]{hairer2006geometric} that a Runge--Kutta method is symplectic if and only if the coefficients satisfy
\begin{equation}
b_i a_{ij} + b_j a_{ji} - b_i b_j = 0, \quad i,j = 1,\dots,s.
\label{symplectic_condition_rk}
\end{equation}

 \section{Numerical integration schemes for solving inverse problems}\label{sec:inverse_problems}

  We will now consider different ways to use numerical integrators when training Hamiltonian neural networks and present important properties of MIRK methods, a key component of the MII that is presented in Chapter \ref{sec:mii_noise}.

\textbf{Inverse ODE problems in Hamiltonian form:}
   We assume to have potentially noisy samples $S_N = \{\tilde y\}_{n=0}^N$ of the solution of an ODE with vector field $f$. The inverse problem can be formulated as the following optimization problem:
 \begin{equation}
   \begin{aligned}
   \operatorname*{arg\,min}_\theta  \sum_{n=0}^{N-1} \bigg \| \tilde y_{n+1} - \Phi_{h,f_{\theta}}(\tilde y_n)   \bigg \|,
   \label{inverse_problem}
   \end{aligned}
 \end{equation}
 where $f_{\theta} = J\nabla H_{\theta}$ is a neural network approximation with parameters $\theta$ of a Hamiltonian vector field $f$, and $\Phi_{h,f_{\theta}}$ is a one-step integration method with step length $h$.
  \begin{wrapfigure}{r}{0.45\textwidth}
    \centering
    \small
    
    \def\RK{(0,0) circle (2cm)}
    \def\SMIRK{(1.5,-0.5) circle (1.5cm)}
    \def\MIRK{(30:-0.7cm) circle (1.3cm)}
    \def\ERK{(180:1.05cm) circle (0.7cm)}
    \def\symplectic{(-5:1cm) ellipse (1.2cm and 0.9cm) }

    \begin{tikzpicture}
        \begin{scope}[shift={(3cm,-5cm)}, fill opacity=0.4]
    
            \draw \RK node[above] {};
            \draw[fill=gray] \ERK node {ERK};
  
            \clip \RK;
            \draw[rotate = 45,fill=orange] \symplectic;
    
            \draw[fill=ntnublue] \MIRK;
            \clip \RK;
            \draw[fill=ntnublue] \SMIRK;
    
        \end{scope}
            \begin{scope}[shift={(3cm,-5cm)}]
            \draw node (ERK) at (180:1.05cm) {ERK};
            \draw node (RK) at (-0.2cm,1.5cm) {RK};
            \draw node (MIRK) at (-125:1.2cm) {MIRK};
            \draw node (SRK) at (55:1.45cm) {SympRK};
            \draw node (SymRK) at (-25:1.4cm) {SymRK};

        \node (mirk4b) at (2,-2.2) {I. Euler, MIRK3, MIRK5};
        \node (rk4b) at (-1.1,2.2) {E. Euler, RK4};
        \node (gl4a) at (1.2,0.6) {};
        \node (gl4b) at (3.2,0.9) {GL4, GL6};
        \node (mirk6a) at (0.2,-0.8) {};
        \node (mirk6b) at (3.6,-1.2) {MIRK4, MIRK6};
        
        \node (midtpoint_a) at (0.2,0.1) {};
        \node (midtpoint_b) at (3.3,-0.2) {Midpoint};
    
      \draw[-stealth,line width=0.4pt] (MIRK.south) to [out=-30,in=165] (mirk4b.west);
    \draw[-stealth,line width=0.4pt] (ERK.west) to [out=110,in=-80] (rk4b);
    \draw[-stealth,line width=0.4pt] (gl4a.east) to [out=0,in=190] (gl4b.west);
    \draw[-stealth,line width=0.4pt] (mirk6a.east) to [out=0,in=190] (mirk6b.west);
    \draw[-stealth,line width=0.4pt] (midtpoint_a.east) to [out=0,in=190] (midtpoint_b.west);
    
         \end{scope}
    \end{tikzpicture}
  
     \caption{Venn diagram of Runge--Kutta (RK) subclasses: explicit RK (ERK), symplectic RK (SympRK), mono-implicit RK (MIRK) and symmetric RK (SymRK).}
    \label{fig:rkclasses}
\end{wrapfigure}
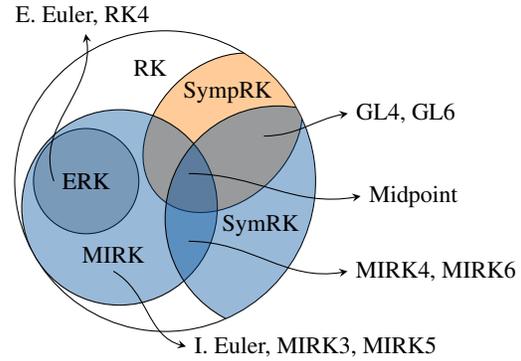
 In the setting of inverse ODE problems, the availability of sequential points $S_N$ could be exploited when a numerical method is used to form interpolation conditions, for $f_{\theta} \approx f$ for each $n$ in the optimization problem \eqref{inverse_problem}. For example, $\tilde y_{n}$ and $\tilde y_{n+1}$ could be inserted in the implicit midpoint method, turning a method that is implicit for IVPs into an explicit method for inverse problems:
\begin{equation}\label{eq:midpoint}
 \Phi_{h,f_{\theta}}(\tilde y_n,\tilde y_{n+1}) = \tilde y_n + hf_{\theta}\big( \frac{\tilde y_n +  \tilde y_{n+1}}{2} \big).
  \end{equation} 
 We denote this as the inverse injection, which defines an inverse explicit property for numerical integrators.
 \begin{definition}[Inverse injection]
 Assume that $\tilde y_n,\tilde y_{n+1} \in S_N$. Let the \textit{inverse injection} for the integrator  $\Phi_{h,f}(y_n,y_{n+1})$ be given by the substitution $(\tilde y_n,\tilde y_{n+1}) \rightarrow (y_n,y_{n+1})$ such that
 $$
 \hat y_{n+1} = \Phi_{h,f}(\tilde y_n,\tilde y_{n+1}).
 $$
 \end{definition}
  \begin{definition}[Inverse explicit]
 A numerical one-step method $\Phi$ is called \textit{inverse explicit} if it is explicit under the inverse injection. 
 \end{definition}
 This procedure is utilized successfully by several authors when learning dynamical systems from data, see e.g.\ \cite{ SymplecticLearningforHamiltonianNeuralNetworks, PortHam_eidnes}. However, this work is the first attempt at systematically exploring numerical integrators under the inverse injection, by identifying the MIRK methods as the class consisting of inverse explicit Runge--Kutta methods.
 \begin{proposition}\label{lemma_inv_exp_mirk}
   MIRK-methods are inverse explicit.
 \end{proposition}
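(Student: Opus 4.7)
The plan is to show that once the inverse injection is applied to the MIRK stage equations, the structural decomposition $A = D + vb^T$ with $D$ strictly lower triangular lets each stage be computed by forward substitution, exactly as in an explicit Runge--Kutta scheme.

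First I would substitute $(\tilde y_n, \tilde y_{n+1}) \to (y_n, y_{n+1})$ into the stage equations~\eqref{mirk-stages}. The sole source of implicitness in the original MIRK method --- the appearance of $y_{n+1}$ inside $f$ via the term $v_i(y_{n+1} - y_n)$, coming from the rank-one piece $vb^T$ of $A$ --- becomes a known linear combination of observed data points. With that term neutralised, the $i$-th stage depends on the remaining stages only through $h\sum_{j=1}^{s} d_{ij} k_j$, and since $D$ is strictly lower triangular the sum is effectively over $j < i$. A short induction on $i$ then produces explicit formulas for each $k_i$ in terms of the data and the previously computed stages, starting from the fully explicit
\[
k_1 = f\big(\tilde y_n + v_1(\tilde y_{n+1} - \tilde y_n)\big).
\]

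With all stages determined explicitly, the update $\hat y_{n+1} = \tilde y_n + h\sum_{i=1}^{s} b_i k_i$ is obtained by a single explicit sweep, matching the definition of inverse explicit. There is no real obstacle here: the proposition is essentially an unpacking of definitions, and the argument turns on recognising that the inverse injection eliminates the ``mono'' part of the implicitness (namely the $vb^T$ contribution) while the remaining $D$-part is implicit-free by the strict lower triangularity hypothesis. The implicit midpoint scheme~\eqref{eq:midpoint} is the prototypical instance of this phenomenon, corresponding to $s = 1$, $v_1 = 1/2$, $b_1 = 1$, $D = 0$.
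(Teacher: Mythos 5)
Your argument is correct and is essentially the paper's own proof: both rely on the inverse injection making the $v_i(y_{n+1}-y_n)$ term a known quantity, after which the strict lower triangularity of $D$ allows the stages to be computed by forward substitution and the update $\hat y_{n+1} = \tilde y_n + h\sum_{i=1}^s b_i k_i$ follows explicitly. The paper simply writes out the triangular cascade of stages rather than phrasing it as an induction, but the content is the same.
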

 
 \begin{proof}
   Since the matrix $D$ in \eqref{mirk-stages} is strictly lower triangular, the stages are given by 
   \begin{align*}
         k_1 &= f (y_n + v_i(y_{n+1} - y_n) )\\
         k_2 &= f (y_n + v_i(y_{n+1} - y_n) + h d_{21} k_1 )\\
         &\vdots\\ 
  k_s &= f (y_n + v_i(y_{n+1} - y_n) + h\sum_{j=1}^{s-1} d_{sj} k_j )
     \end{align*}
     meaning that if $y_n$ and $y_{n+1}$ are known, all stages, and thus the next step $\hat y_{n+1} = y_n + h\sum_{i=1}^s b_i k_i$, could be computed explicitly.
 \end{proof}
 Because of their explicit nature when applied to inverse ODE problems, MIRK methods are an attractive alternative to explicit Runge--Kutta methods; in contrast to explicit RK methods, they can be symplectic or symmetric, or both, without requiring the solution of systems of nonlinear equations, even when the Hamiltonian is non-separable. Figure \ref{fig:rkclasses} illustrates the relation between various subclasses and the specific methods are described in Table \ref{integrator table} in Appendix \ref{more_on_num_int}. 
 
 In addition, for $s$-stage MIRK methods, it is possible to construct methods of order $p = s+1$ \cite{burrage1994order}. This is in general higher order than what is possible to obtain with $s$-stage explicit Runge--Kutta methods. Further, computational gains could also be made by reusing evaluations of the vector field between multiple steps, which using MIRK methods allow for, as explained in Appendix \ref{appendix:comp_cost}. The dependency structure on the data $S_N$ of explicit RK (ERK) methods, MIRK methods and the SRNN method \cite{Chen2020Symplectic} is illustrated in Figure \ref{training_structure}.

 \textbf{Maximal order of symplectic MIRK methods:}
 From the preceding discussion, it is clear that symplectic MIRK methods are of interest when learning Hamiltonian systems from data, since they combine computational efficiency 
 with the ability to preserve useful, geometric properties. Indeed, symplectic integrators in the training of HNNs have been considered in \cite{offen2022symplectic,Chen2020Symplectic,DeepHamiltoniannetworksbasedonsymplecticintegrators,SymplecticLearningforHamiltonianNeuralNetworks, SympNetsIntrinsicstructure-preservingsymplecticnetworksforidentifyingHamiltoniansystems}. The subclass of symplectic MIRK methods is represented by the middle, dark blue field in the Venn diagram of Figure \ref{fig:rkclasses}. The next result gives an order barrier for symplectic MIRK methods that was, to the best of our knowledge, not known up to this point.

 \begin{theorem}\label{max_ord_symp}
   The maximum order of a symplectic MIRK method is $p=2$.
 \end{theorem}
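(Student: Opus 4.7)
The plan is to apply the symplecticity condition \eqref{symplectic_condition_rk} to the MIRK structure $A = D + vb^T$, derive strong rigidity on the coefficients $(b,v,D)$, and then collide this rigidity with the classical third-order tree condition $\sum_i b_i c_i^2 = 1/3$.

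First I would exploit the diagonal case $i=j$ of \eqref{symplectic_condition_rk}. Because $D$ is strictly lower triangular, $a_{ii} = d_{ii} + v_i b_i = v_i b_i$, so the condition reduces to $b_i(2a_{ii} - b_i) = b_i(2v_i b_i - b_i) = 0$. Thus at every stage either $b_i = 0$ or $v_i = 1/2$. Stages with $b_i = 0$ can then be removed: the off-diagonal condition at such an $i$ reads $b_j a_{ji} = 0$ for every $j$, which forces $a_{ji} = d_{ji} = 0$ whenever the other stage is active ($b_j \neq 0$). So the inactive stages never feed into the active ones and can be dropped, leaving a smaller MIRK scheme (the restricted $D$ is still strictly lower triangular) producing the same update $y_{n+1}$. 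After this reduction I may assume $b_i \neq 0$ and hence $v_i = 1/2$ for every stage.

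Next I would use the off-diagonal condition to pin down $D$. For $i<j$ strict lower triangularity gives $d_{ij} = 0$, so $a_{ij} = v_i b_j = b_j/2$, and \eqref{symplectic_condition_rk} becomes
\[
b_i \cdot \tfrac{b_j}{2} + b_j(d_{ji} + \tfrac{b_i}{2}) - b_i b_j = b_j d_{ji} = 0,
\]
which forces $d_{ji} = 0$. Hence $D = 0$ and $A = \tfrac{1}{2}\mathbf{1} b^T$. Combined with the consistency requirement $\sum_j b_j = 1$ (needed already for order $\geq 1$), this yields the uniform row sums $c_i = \sum_j a_{ij} = 1/2$ at every stage.

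The conclusion is then immediate: under the uniform value $c_i = 1/2$ the order-three tree condition evaluates to $\sum_i b_i c_i^2 = 1/4 \neq 1/3$, so no symplectic MIRK method attains order three. Sharpness comes from the implicit midpoint rule, which is the $s = 1$ MIRK with $b_1 = 1$, $v_1 = 1/2$, and has order $p = 2$. The only part I expect to require genuine care is the reduction step removing inactive stages, since one must verify that those stages truly decouple from the active ones and that the reduced scheme remains in MIRK form; once that decoupling is in place the rest is a short algebraic computation.
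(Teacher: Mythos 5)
Your proposal is correct and follows essentially the same route as the paper's proof in Appendix \ref{proof_max_order}: the diagonal symplecticity condition forces $b_i=0$ or $v_i=\tfrac12$, the $b_i=0$ stages decouple and can be removed, the surviving stages satisfy $A=\tfrac12\one b^T$ so that every $c_i=\tfrac12$, and the third-order condition $\sum_i b_i c_i^2=\tfrac13$ fails since the left side equals $\tfrac14$, with the midpoint rule showing $p=2$ is attained. Your explicit verification that $D=0$ on the active block is a slightly tidier presentation of the same reduction the paper carries out via its block Butcher tableau.
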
\vspace{-10pt}
 \begin{proof}
   This is a shortened version of the full proof, which can be found in Appendix \ref{proof_max_order}.
 A MIRK method is a Runge--Kutta method with coefficients $a_{ij} = d_{ij} + v_ib_j$. Requiring $d_{ij}, b_i$ and $v_i$ to satisfy the symplecticity conditions of \eqref{symplectic_condition_rk} in addition to $D$ being strictly lower triangular, yields the following restrictions
 \begin{equation}
 \begin{aligned}
 b_i d_{ij} + b_i b_j (v_j + v_i - 1) &= 0,  &&\text{if} \; i \neq j,\\
  b_i = 0 \;\; \text{or} \;\; v_i &= \frac{1}{2}, &&\text{if} \; i = j,\\
 d_{ij} &= 0, &&\text{if} \; i>j.
 \end{aligned}
 \end{equation}
 These restrictions result in an RK method that could be reduced to choosing a coefficient vector $b\in\R^s$ and choosing stages on the form
 $
     k_i = f\big(y_n + \frac{h}{2}\sum_{j}^s b_jk_j\big)
 $
 for $i = 1,\dots,s$. It is then trivial to check that this method can only be of up to order $p=2$. Note that for $s=1$ and $b_1 = 1$ we get the midpoint method.
 \end{proof}

 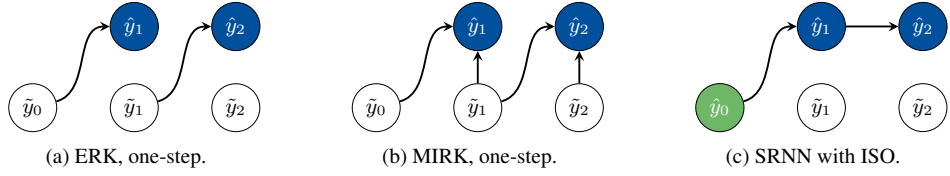
\begin{figure}
\centering
\scalebox{0.9}{
\begin{subfigure}[b!]{0.25\textwidth}
\small
\centering
\begin{tikzpicture}[
  main/.style = {draw, circle, minimum size=5mm},
  main_r2/.style = {draw, circle, minimum size=5mm, text = white, fill=ntnublue},
  ] 
  \small
  \node[main] (0) at (0,0) {$\tilde y_0$}; 
  \node[main] (1) at (1.5,0) {$\tilde y_1$};
  \node[main_r2] (h1) at (1.5,1.2) {$\hat y_1$};
  \node[main] (2) at (3,0) {$\tilde y_2$};
  \node[main_r2] (h2) at (3,1.2) {$\hat y_2$};
  \draw[-stealth,line width=0.8pt] (0) to  [out=15,in=180]  (h1);
  \draw[-stealth,line width=0.8pt] (1) to  [out=15,in=180]  (h2);
\end{tikzpicture} 
\caption{ERK, one-step.}
\end{subfigure}
\hspace{4em}
\begin{subfigure}[b!]{0.25\textwidth}
\centering
\begin{tikzpicture}[
  main/.style = {draw, circle, minimum size=5mm},
  main_r2/.style = {draw, circle, minimum size=5mm, text = white, fill=ntnublue},
  ] 
  \small
  \node[main] (0) at (0,0) {$\tilde y_0$}; 
  \node[main] (1) at (1.5,0) {$\tilde y_1$};
  \node[main_r2] (h1) at (1.5,1.2) {$\hat y_1$};
  \node[main] (2) at (3,0) {$\tilde y_2$};
  \node[main_r2] (h2) at (3,1.2) {$\hat y_2$};
  \draw[-stealth,line width=0.8pt] (0) to  [out=15,in=180]  (h1);
  \draw[-stealth,line width=0.8pt] (1) to   (h1);
  \draw[-stealth,line width=0.8pt] (1) to  [out=15,in=180]  (h2);
  \draw[-stealth,line width=0.8pt] (2) to   (h2);
\end{tikzpicture} 
\caption{MIRK, one-step.}
\end{subfigure}
\hspace{4em}
\begin{subfigure}[b!]{0.25\textwidth}
\begin{tikzpicture}[
  main/.style = {draw, circle, minimum size=5mm},
  main_r2/.style = {draw, circle, minimum size=5mm, text = white, fill=ntnublue},
    main_r3/.style = {draw, circle, minimum size=5mm, text = white, fill=ntnublue_light}
  ] 
  \small
  \node[main_r3] (0) at (0,0) {$\hat y_0$}; 
  \node[main] (1) at (1.5,0) {$\tilde y_1$};
  \node[main_r2] (h1) at (1.5,1.2) {$\hat y_1$};
  \node[main] (2) at (3,0) {$\tilde y_2$};
  \node[main_r2] (h2) at (3,1.2) {$\hat y_2$};
  \draw[-stealth,line width=0.8pt] (0) to  [out=15,in=180]  (h1);
  \draw[-stealth,line width=0.8pt] (h1) to  (h2);
\end{tikzpicture} 
\caption{SRNN with ISO.}
\end{subfigure}
}

 
\caption{Differences of observation dependency, assuming $N=2$ for explicit and mono-implicit one-step training, and explicit multi-step training with initial state optimization (green node $\hat y_0$). }
\label{training_structure}
\vspace{-1em}
\end{figure}

 \textbf{Numerical integrators outside the RK class:}
 While this paper is mainly concerned with MIRK methods, several other types of numerical integrators could be of interest for inverse problems. \textit{Partitioned Runge--Kutta methods} are an extension and not a subclass of RK methods, and can be symplectic and symmetric, while also being explicit for separable Hamiltonian systems.
 The Störmer--Verlet integrator of order $p=2$ is one example. Higher order methods of this type are derived in \cite{yoshida1990construction} and used for learning Hamiltonian systems in \cite{desai2021variational,dipietro2020sparse}. \textit{Discrete gradient methods} \cite{quispel1996discrete,mclachlan1999geometric} are inverse explicit and well suited to train Hamiltonian neural networks using a modified automatic differentiation algorithm \cite{DeepEnergy-BasedModelingofDiscrete-TimePhysics}. This method could be extended to higher order methods as shown in \cite{eidnes2022order}. In contrast to symplectic methods, discrete gradient methods preserve the Hamiltonian exactly up to machine precision. A third option is \textit{elementary differential Runge--Kutta methods} \cite{uria1995metodos}, where for instance \cite{chartier_numerical_2007} show how to use backward error analysis to construct higher order methods from modifications to the midpoint method. This topic is discussed further in Appendix \ref{appendix:higher_order_inv_explicit}, where we also present a novel, symmetric discrete gradient method of order $p=4$.

 \section{Mean inverse integrator for handling noisy data}\label{sec:mii_noise}
 
 \textbf{Noisy ODE sample:}
 It is often the case that the samples $S_N$ are not exact measurements of the system, but are perturbed by noise. In this paper, we model the noise as independent, normally distributed perturbations
     \begin{equation}
         \tilde y_n = y(t_n) + \delta_{n}, \quad \delta_{n} \sim \mathcal N(0,\sigma^2I),
         \label{noisy_obs}
       \end{equation}
     where $\mathcal N(0,\sigma^2I)$ represents the multivariate normal distribution. 
      With this assumption, a standard result from statistics tells us that the variance of a sample-mean estimator with $N$ samples converges to zero at the rate of $\frac{1}{N}$. That is, assuming that we have $N$ samples $\tilde y_n^{(1)},\dots,\tilde y_n^{(N)}$, then 
     \[
     \V[\overline y_n] = \V\bigg[\frac{1}{N}\sum_{j=1}^N \tilde y_n^{(j)}\bigg] = \frac{\sigma^2}{N}.
     \]
Using the inverse injection with the midpoint method, the vector field is evaluated in the average of $\tilde y_n$ and $\tilde y_{n+1}$, reducing the variance of the perturbation by a factor of two, compared to evaluating the vector field in $\tilde y_n$, as is done in all explicit RK methods. Furthermore, considering the whole data trajectory $S_N$, multiple independent approximations to the same point $y(t_n)$ can enable an even more accurate estimate. This is demonstrated in the analysis presented in Theorem \ref{thm:propagation of noise mirk} and in Figure \ref{fig:propagation of noise}.
 
     \textbf{Averaging multiple trajectories:}
    In the inverse ODE problem, we assume that there exists an {\it exact} vector field $f$ whose flow interpolates the discrete trajectories $S_N$, and the flow of this vector field satisfies the group property \eqref{group_flow}.
   The numerical flow $\Phi_{h,f}$ for a method of order $p$ satisfies this property only up to an error $\mathcal{O}(h^{p+1})$ over one step. 
   In the presence of noisy data, compositions of one-step methods can be used to obtain multiple different approximations to the same point $y(t_n)$, by following the numerical flow from different nearby initial values $\tilde y_j, j\neq n$, and thus reduce the noise by averaging over these multiple approximations. Accumulation of the local truncation error is expected when relying on points further away from $t_n$. However, for sufficiently small step sizes $h$ compared to the size of the noise $\sigma$, one can expect increased accuracy when averaging over multiple noisy samples. 

   As an example, assume that we know the points $\{\tilde y_0,\tilde y_1,\tilde y_2,\tilde y_3\}$. Then $ y(t_2)$ can be approximated by computing the mean of the numerical flows $\Phi_{h,f}$ starting from different initial values:
 \begin{equation}
 \begin{split}
     \overline y_{2} &= \frac{1}{3}  \big ( \Phi_{h,f}(\tilde y_{1}) + \Phi_{h,f} \circ \Phi_{h,f}(\tilde y_{0}) + \Phi^*_{-h,f} (\tilde y_{3}) \big) \\
     &\approx \frac{1}{3}  \big (\tilde y_0 +\tilde y_1 +\tilde y_3 + h( \Psi_{0,1} + 2 \Psi_{1,2} -  \Psi_{2,3} )  \big),
 \end{split}
     \label{averaging_trajectories}
 \end{equation}
 where we by $\Phi^*$ mean the adjoint method of $\Phi$, as defined in \cite[Ch. V]{hairer2006geometric}, and we let $\Psi_{n,n+1}$ be the increment of an inverse-explicit numerical integrator, so that 
 \[
 \Phi_{h,f}(\tilde y_n,\tilde  y_{n+1}) = \tilde y_n + h \Psi_{n,n+1}.
 \]
 For example, for the midpoint method, we have that $ \Psi_{n,n+1} = f(\frac{\tilde y_n + \tilde y_{n+1}}{2})$. When stepping in negative time in \eqref{averaging_trajectories}, we use the adjoint method in order to minimize the number of vector field evaluations, also when non-symmetric methods are used (which implies that we always use e.g.\ $\Psi_{1,2}$ and not $\Psi_{2,1}$). Note that in order to derive the approximation in  \eqref{averaging_trajectories}, repeated use of the inverse injection allows the known points $\tilde y_n$  to form an explicit integration procedure, where the composition of integration steps are approximated by summation over increments $\Psi_{n,n+1}$. This approximation procedure is presented in greater detail in Appendix \ref{appendix:detail_of_mii}.

\textbf{Mean inverse integrator:}
The mean approximation over the whole trajectory $\overline y_{n}$, for $n = 0,\dots,N$, could be computed simultaneously, reusing multiple vector field evaluations in an efficient manner. This leads to what we call the mean inverse integrator. For example, when $N=3$ we get
\begin{equation*}
    \begin{bmatrix}
      \overline y_0\\\overline y_1\\\overline y_2\\\overline y_3\\
    \end{bmatrix}\! =
     \frac{1}{3} \!
      \begin{bmatrix}
        0&1&1&1\\
        1&0&1&1\\
        1&1&0&1\\
        1&1&1&0\\
      \end{bmatrix} \!\!\!
      \begin{bmatrix}
         \tilde y_0\\ \tilde  y_1\\ \tilde   y_2\\ \tilde y_3\\
      \end{bmatrix}
      +
     \frac{h}{3} \!
     \begin{bmatrix}
      -3&-2&-1\\
      1&-2&-1\\
      1&2&-1\\
      1&2&3
    \end{bmatrix} \!\!\! 
    \begin{bmatrix}
      \Psi_{0,1}\\ \Psi_{1,2}\\ \Psi_{2,3}
    \end{bmatrix},
    \label{mean}
  \end{equation*}
  and the same structure is illustrated in Figure \ref{mii_structure}.
  
  \begin{definition}[Mean inverse integrator]
  \label{mii_equation}
For a sample $S_N$ and an inverse-explicit integrator $\Psi_{n,n+1}$, the mean inverse integrator is given by
  \begin{align}
    \overline Y &= \frac{1}{N}\bigg ( U \tilde Y + h W \Psi    \bigg )
\end{align}
where $\tilde  Y := [\tilde  y_0, \dots, \tilde y_N]^T \in \R^{(N+1) \times m}$, 
    $\Psi := [\Psi_{0,1},\dots,\Psi_{N-1,N}]^T\in \R^{N\times m}$.

Finally, $U\in \R^{(N+1)\times (N+1)}$ and $W\in \R^{(N+1) \times N}$ are given by
  \begin{align*}
    [U]_{ij} &:= \begin{cases}
        0&\text{if} \quad  i = j\\
        1 &\text{else}
    \end{cases}
    \quad\quad\text{and}\quad\quad
    [W]_{ij} := \begin{cases}
        j - 1 - N &\text{if} \quad j \geq i\\
        j &\text{else}
    \end{cases}.
  \end{align*}
  \end{definition}
  By substituting the known vector field $f$ with a neural network $f_{\theta}$ and denoting the matrix containing vector field evaluations by $\Psi_{\theta}$ such that $\overline Y_{\theta} := \frac{1}{N} ( U \tilde Y + h W \Psi_{\theta} )$, we can formulate an analogue to the inverse problem \eqref{inverse_problem} by
  \begin{equation}
    \operatorname*{arg\,min}_\theta \big \|\tilde  Y - \overline Y_{\theta}   \big \|.
    \label{noisy_opt}
  \end{equation}

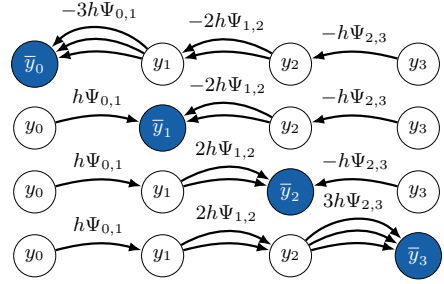
\begin{wrapfigure}{r}{0.4\textwidth}
\vspace{-10pt}
    \centering
\small
\scalebox{0.85}{
  \begin{tikzpicture}[
    main/.style = {draw, circle, minimum size=4mm},
    main_r2/.style = {draw, circle, minimum size=4mm, text = white, fill={rgb:white,1;ntnublue,10}},
    scale=1, 
    ] 
  
    \node[main_r2] (1) at (0,0) {$\overline y_0$}; 
    \node[main] (2) at (2,0) {$y_1$};
    \node[main] (3) at (4,0) {$y_2$};
    \node[main] (4) at (6,0) {$y_3$};
  
    \draw[{Latex[length=2mm]}-,line width=1pt] (3) to [out=15,in=165] node[above]{$ -h\Psi_{2,3} $}  (4);
    \draw[{Latex[length=2mm]}-,line width=1pt] (2) to [out=15,in=165] node[above]{$ $}  (3);
    \draw[{Latex[length=2mm]}-,line width=1pt] (2) to [out=30,in=150] node[above]{$ -2h\Psi_{1,2} $}  (3);
    \draw[{Latex[length=2mm]}-,line width=1pt]  (1) to [out=15,in=165] node[above]{$ $}  (2) ;
    \draw[{Latex[length=2mm]}-,line width=1pt]  (1) to [out=30,in=150] node[above]{$ $}  (2) ;
    \draw[{Latex[length=2mm]}-,line width=1pt]  (1) to [out=45,in=135] node[above]{$ -3h\Psi_{0,1} $}  (2) ;

    \node[main] (1) at (0,-1.0) {$y_0$}; 
    \node[main_r2] (2) at (2,-1.0) {$\overline y_1$};
    \node[main] (3) at (4,-1.0) {$y_2$};
    \node[main] (4) at (6,-1.0) {$y_3$};
  
    \draw[-{Latex[length=2mm]},line width=1pt] (1) to [out=15,in=165] node[above]{$ h\Psi_{0,1} $}  (2);
    \draw[{Latex[length=2mm]}-,line width=1pt] (2) to [out=15,in=165] node[above]{$ $}  (3);
    \draw[{Latex[length=2mm]}-,line width=1pt] (2) to [out=30,in=150] node[above]{$ -2h\Psi_{1,2} $}  (3);
    \draw[{Latex[length=2mm]}-,line width=1pt] (3) to [out=15,in=165] node[above]{$ -h\Psi_{2,3} $}  (4);
    
    \node[main] (1) at (0,-2.0) {$y_0$}; 
    \node[main] (2) at (2,-2.0) {$y_1$};
    \node[main_r2] (3) at (4,-2.0) {$\overline y_2$};
    \node[main] (4) at (6,-2.0) {$y_3$};
  
    \draw[-{Latex[length=2mm]},line width=1pt] (1) to [out=15,in=165] node[above]{$ h\Psi_{0,1} $}  (2);
    \draw[-{Latex[length=2mm]},line width=1pt] (2) to [out=15,in=165] node[above]{$ $}  (3);
    \draw[-{Latex[length=2mm]},line width=1pt] (2) to [out=30,in=150] node[above]{$ 2h\Psi_{1,2} $}  (3);
    \draw[{Latex[length=2mm]}-,line width=1pt] (3) to [out=15,in=165] node[above]{$ -h\Psi_{2,3} $}  (4);

    \node[main] (1) at (0,-3.0) {$y_0$}; 
    \node[main] (2) at (2,-3.0) {$y_1$};
    \node[main] (3) at (4,-3.0) {$y_2$};
    \node[main_r2] (4) at (6,-3.0) {$\overline y_3$};
  
    \draw[-{Latex[length=2mm]},line width=1pt] (1) to [out=15,in=165] node[above]{$ h\Psi_{0,1} $}  (2);
    \draw[-{Latex[length=2mm]},line width=1pt] (2) to [out=15,in=165] node[above]{$ $}  (3);
    \draw[-{Latex[length=2mm]},line width=1pt] (2) to [out=30,in=150] node[above]{$ 2h\Psi_{1,2} $}  (3);
    \draw[-{Latex[length=2mm]},line width=1pt]  (3) to [out=15,in=165] node[above]{$ $}  (4) ;
    \draw[-{Latex[length=2mm]},line width=1pt]  (3) to [out=30,in=150] node[above]{$ $}  (4) ;
    \draw[-{Latex[length=2mm]},line width=1pt]  (3) to [out=45,in=135] node[above]{$ 3h\Psi_{2,3} $}  (4) ;
  
  \end{tikzpicture} 
  }

\caption{Illustration of the structure of the mean inverse integrator for $N=3$.}
\label{mii_structure}
\vspace{-10pt}
\end{wrapfigure}

  \textbf{Analysis of sensitivity to noise:} Consider the optimization problems using integrators either as one-step methods or MII by \eqref{inverse_problem} resp.\ \eqref{noisy_opt}. We want to investigate how uncertainty in the data $\tilde y_n$ introduces uncertainty in the optimization problem. Assume, for the purpose of analysis, that the underlying vector field $f(y)$ is known. Let
\begin{align*}
  \mathcal T^{\text{OS}}_{n} &:= \tilde y_n - \Phi_{h,f}(\tilde y_{n-1}, \tilde y_n),\\
  \mathcal T^{\text{MII}}_{n} &:= \tilde y_n - [\overline Y]_n
\end{align*}
be the \textit{optimization target} or the expression one aims to minimize using a one-step method (OS) and the MII, where $\overline Y$ is given by Definition \ref{mii_equation}. For a matrix $A$ with eigenvalues $\lambda_i(A)$, the spectral radius is given by $\rho(A) := \max_{i} |\lambda_i(A)|$. An analytic expression that approximates $\rho(\mathcal T^{\text{OS}}_{n})$ and $\rho(\mathcal T^{\text{MII}}_{n})$ by linearization of $f$ for a general MIRK method is provided below. 
\begin{theorem}
  \label{thm:propagation of noise mirk}
  Let $S_N = \{\tilde y_n \}_{n=0}^N$ be a set of noisy samples, equidistant in time with step size $h$, with Gaussian perturbations as defined by \eqref{noisy_obs} with variance $\sigma^2$. Assume that a MIRK integrator $\Phi_{h,f}$ is used as a one-step method. Then the spectral radius is approximated by
      \begin{align}
           \rho_n^{\text{OS}} := \rho \bigg(  \V \big [ \mathcal T^{\text{OS}}_n  \big] \bigg )&\approx
        \sigma^2 \bigg \|2I + hb^T(\one - 2v)\big(f' \! + \! f'^T\big) + h^2 Q^{\text{OS}} \bigg \|_2, \label{eq:var_os}\\
      \rho_n^{\text{MII}} :=\rho \bigg( \V \big [ \mathcal T^{\text{MII}}_n \big]\bigg ) &\approx 
    \frac{\sigma^2}{N} \bigg \|  (1 + N)I + hP_{nn} +  \frac{h}{N}\sum_{\substack{j = 0 \\j\neq n}}^s P_{nj} + \frac{h^2}{N} Q^{\text{MII}}  \bigg \|_2,\label{eq:var_mii}
\end{align} 

  where $f' := f'(y_n)$ and $P_{nj}, Q^{\text{OS}}$ and $Q^{\text{MII}}$ (defined in \eqref{variance_matrices} in Appendix \ref{proof:propagation of noise mirk}) are matrices independent of the step size $h$.
\end{theorem}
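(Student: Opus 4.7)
The plan is to linearize both optimization targets in the small perturbations $\delta_j$ and then expand the resulting matrix coefficients to order $h^2$. Because the $\delta_j$ are mutually independent with covariance $\sigma^2 I$, if I can write $\mathcal{T}_n \approx \mu_n + \sum_j A_{n,j}\delta_j$ with deterministic $\mu_n, A_{n,j}$, then $\V[\mathcal{T}_n] = \sigma^2\sum_j A_{n,j}A_{n,j}^T$, which is symmetric positive semi-definite, so $\rho(\V[\mathcal{T}_n]) = \|\V[\mathcal{T}_n]\|_2$. The approximations \eqref{eq:var_os}--\eqref{eq:var_mii} then follow by identifying each $A_{n,j}$ from the MIRK stage equations \eqref{mirk-stages} and collecting terms in $h$.

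For the one-step case, differentiating \eqref{mirk-stages} at the noise-free arguments and reading off how $\delta_{n-1}$ and $\delta_n$ enter each stage via the weights $(1-v_i)$ and $v_i$ yields Jacobians
\[\nabla_1 \Phi = I + h(1 - b^T v)f' + h^2 M_1 + \mathcal{O}(h^3), \qquad \nabla_2 \Phi = h(b^T v)f' + h^2 M_2 + \mathcal{O}(h^3),\]
where $M_1, M_2$ collect the chain-rule contributions from the strictly lower-triangular matrix $D$. Then $\mathcal{T}^{\text{OS}}_n = (I - \nabla_2\Phi)\delta_n - \nabla_1\Phi\,\delta_{n-1}$ up to a deterministic $\mathcal{O}(h^{p+1})$ local-truncation remainder, and expanding the variance $\sigma^2\bigl[(I-\nabla_2\Phi)(I-\nabla_2\Phi)^T + \nabla_1\Phi\,\nabla_1\Phi^T\bigr]$ and using $b^T(\one - 2v) = 1 - 2 b^T v$ reproduces the claimed form $\sigma^2\bigl[2I + h\,b^T(\one-2v)(f' + f'^T) + h^2 Q^{\text{OS}}\bigr]$ of \eqref{eq:var_os}, with $Q^{\text{OS}}$ absorbing the $f'f'^T$ cross-terms together with the $M_k$ contributions.

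For MII I apply the same linearization to each increment $\Psi_{j,j+1}$, which contributes $(1-b^Tv)f'$ on $\delta_j$ and $(b^Tv)f'$ on $\delta_{j+1}$ at leading order in $h$. Substituting this into Definition \ref{mii_equation} and collecting the coefficient of each independent $\delta_j$ in $[\overline Y]_n$ from the corresponding rows of $U$ and $W$, I obtain, with $\alpha := b^Tv$,
\[A_{n,n} = I - \tfrac{h}{N}\bigl(\alpha[W]_{n,n-1} + (1-\alpha)[W]_{n,n}\bigr)f' + \mathcal{O}(h^2),\]
and $A_{n,j} = -\tfrac{1}{N}I - \tfrac{h}{N}\bigl(\alpha[W]_{n,j-1} + (1-\alpha)[W]_{n,j}\bigr)f' + \mathcal{O}(h^2)$ for $j \neq n$, with the natural boundary conventions at $j=0, N$ where only one of the two $\Psi$-contributions is present. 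Forming $\sigma^2\sum_j A_{n,j}A_{n,j}^T$ produces the constant piece $\tfrac{\sigma^2}{N}(N+1)I$ and a linear-in-$h$ correction of $(f'+f'^T)$-type weighted by the $W$-entries, which are what the matrices $P_{nn}$ and $P_{nj}$ in \eqref{eq:var_mii} encode, plus an $h^2$ remainder $Q^{\text{MII}}$.

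The main obstacle is not the linearization itself but the careful bookkeeping of the $\mathcal{O}(h^2)$ contributions that pin down the explicit form of $P_{nj}$, $Q^{\text{OS}}$ and $Q^{\text{MII}}$: the squared first-order Jacobians generate $f'f'^T$-type products, the chain rule through the $h\sum_j d_{ij}k_j$ coupling in \eqref{mirk-stages} injects further second-order contributions, and in the MII case the boundary handling at $j=0, N$ must be done consistently across the $U\tilde Y$ and $hW\Psi$ parts. Once the $A_{n,j}$ are organized into \eqref{variance_matrices}, the spectral-radius identity $\rho = \|\cdot\|_2$ is immediate from symmetry and positive semi-definiteness of each covariance matrix.
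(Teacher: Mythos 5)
Your proposal follows essentially the same route as the paper's proof: linearize the MIRK step (equivalently, Taylor-expand the stages) in the independent Gaussian perturbations, collect the coefficient matrix $A_{n,j}$ of each $\delta_j$, sum the resulting $\sigma^2 A_{n,j}A_{n,j}^T$ terms to obtain the covariance, and use symmetry of the covariance matrix to identify the spectral radius with the 2-norm. The only cosmetic difference is bookkeeping: you fold the $\mathcal{O}(h^2)$ chain-rule contributions from the strictly lower-triangular coupling $D$ into $Q^{\text{OS}}$, whereas the paper's $Q^{\text{OS}}$ in \eqref{variance_matrices} contains only the $f'f'^T$ products and leaves those contributions in the overall approximation error.
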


\begin{wrapfigure}{r}{0.4\textwidth}
  \vspace{-10pt}
    \centering
      \includegraphics[width=0.4\textwidth]{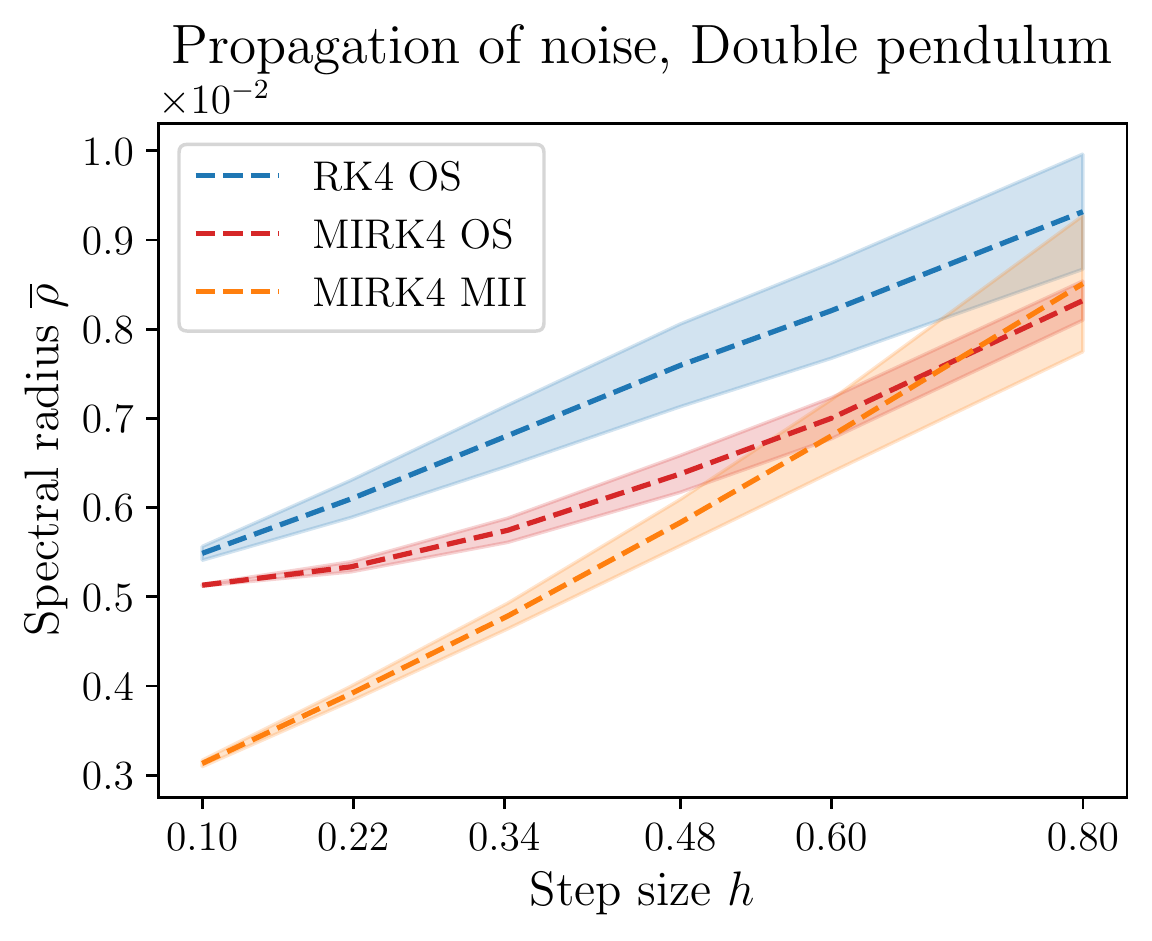}
    \caption{Average of $\overline \rho$ over $10$ trajectories. The shaded area represent one standard deviation.}\vspace{-10pt}
    \label{fig:propagation of noise}
  \end{wrapfigure}

The proof is found in Appendix \ref{proof:propagation of noise mirk}. Let $\alpha := b^T(\one - 2v)$ denote the coefficients of the first order term in $h$ of Equation \eqref{eq:var_os}. For any explicit RK method we have that $v = 0$ and since $b^T\one = 1$ (method of at least order one) we find that $\alpha_{\text{ERK}} = 1$. Considering the Butcher tableau of MIRK4 in Figure \ref{example_mirk} we find that $\alpha_{\text{MIRK}4} = 0$. Thus, as $h \rightarrow 0$ we would expect quadratic convergence of MIRK4 and linear convergence of RK4 for $\rho_n^{\text{OS}}$ to $2\sigma^2$. Considering MII \eqref{eq:var_mii} one would expect linear convergence for $\rho_n^{\text{MII}}$ to $\sigma^2$ if $N$ is large, as $h \rightarrow 0$.

A numerical approximation of $\rho^{\text{OS}}_n$ and $\rho^{\text{MII}}_n$ could be realized by a Monte-Carlo estimate. We compute the spectral radius $\hat \rho_n$ of the empirical covariance matrix of $\mathcal T^{\text{OS}}_{n}$ and $\mathcal T^{\text{MII}}_{n}$ by sampling $5 \cdot 10^3$ normally distributed perturbations $\delta_n$ with $\sigma^2 = 2.5\cdot 10^{-3}$ to each point $y_n$ in a trajectory of $N+1$ points and step size $h$. We then compute the trajectory average $\overline \rho = \frac{1}{N+1}\sum_{n=0}^{N}\hat \rho_n$, fix the end time $T = 2.4$, repeat the approximations for decreasing step sizes $h$ and increasing $N$ and compute the average of $\overline \rho$ for $10$ randomly sampled trajectories $S_N$ from the double pendulum system. The plot in Figure \ref{fig:propagation of noise} corresponds well with what one would expect from Theorem \ref{thm:propagation of noise mirk} and confirms that first MIRK (with $v \neq 0$) and secondly MII reduces the sensitivity to noise in the optimization target.

\section{Experiments}\label{sec:experiments}

\begin{wrapfigure}{r}{0.45\textwidth}
  \centering
  \includegraphics[trim=0 0 116 0,clip,width=0.45\textwidth]{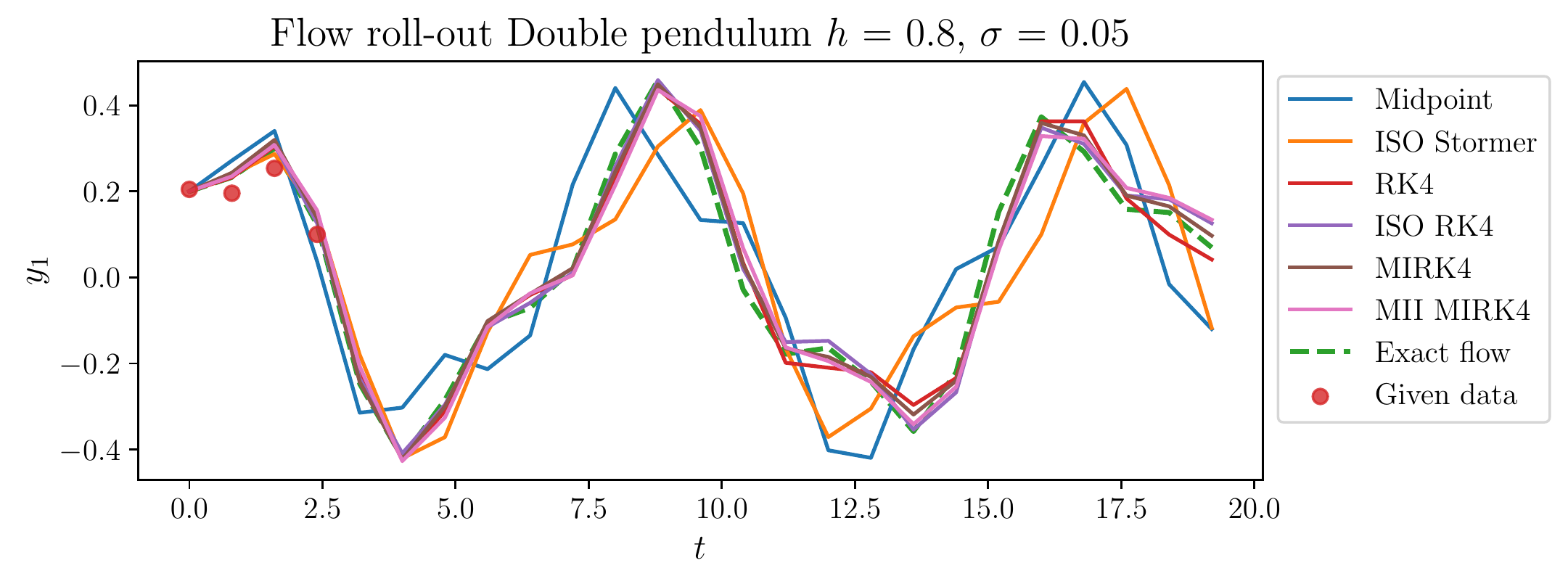}
  \includegraphics[trim=0 0 116 0,clip,width=0.45\textwidth]{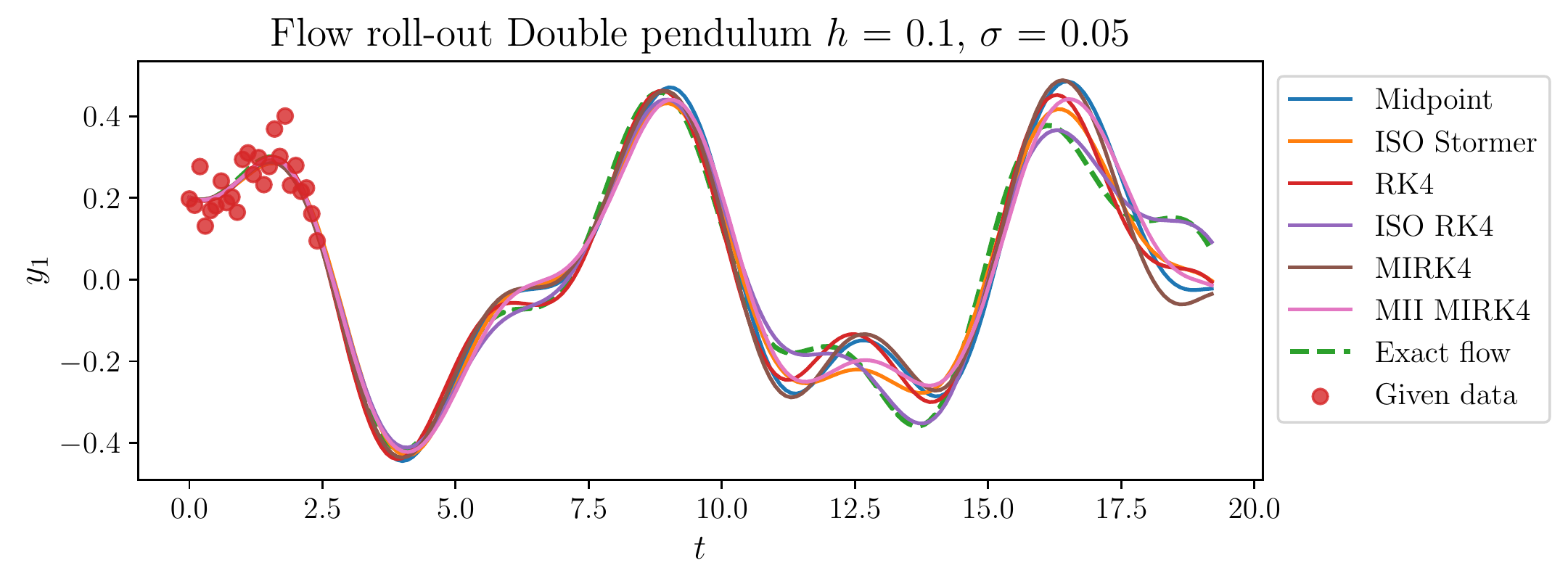}
  \includegraphics[trim=515 170 10 35,clip,width=0.1\textwidth]{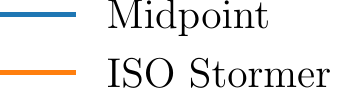}
  \includegraphics[trim=515 138 10 65,clip,width=0.1\textwidth]{plots/roll_out/roll-out_henon-heiles_j=1_sigma=0.05_dt=0.1.pdf}
  \includegraphics[trim=515 104 10 95,clip,width=0.1\textwidth]{plots/roll_out/roll-out_henon-heiles_j=1_sigma=0.05_dt=0.1.pdf}
    \includegraphics[trim=515 135 10 67,clip,width=0.1\textwidth]{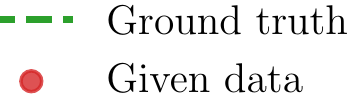}
  \caption{Roll-out in time obtained by integrating over the learned vector fields when training on data from the double pendulum Hamiltonian. }
    \label{fig:roll_out_time_dp}
      \vspace{-20pt}
\end{wrapfigure}

 \textbf{Methods and test problems: } 
 We train HNNs using different integrators and methods in the inverse problem \eqref{inverse_problem}. We use MIRK4 together with the MII method and compare to the implicit midpoint method, RK4 and MIRK4 applied as one-step methods, as well as ISO followed by Störmer--Verlet and RK4 integrated over multiple time-steps. The latter strategy, illustrated in Figure \ref{training_structure}, was suggested in \cite{Chen2020Symplectic}, where Störmer--Verlet is used. 
 Separable networks $H_{\theta}(q,p) = H_{1,\theta}(q) + H_{2,\theta}(p) $ are trained on data from the Fermi--Pasta--Ulam--Tsingou (FPUT) problem and the Hénon--Heiles system. For the double pendulum, which is non-separable, a fully connected network is used for all methods except Störmer--Verlet, which requires separability in order to be explicit. The Hamiltonians are described in Appendix \ref{test_problems} and all systems have solutions $y(t) \in \R^4$.
 
 After using the specified integrators in training, approximated solutions are computed for each learned vector field $f_{\theta}$ using the Scikit-learn implementation of DOP853 \cite{DORMAND198019}, which is also used to generate the training data. The error is averaged over $M = 10$ points and we find what we call the flow error by
  \begin{equation}
  \label{flow_error}
  \begin{split}
    e(f_{\theta}) &= \frac{1}{M} \sum_{n=1}^M  \|\hat y_n - y(t_n) \|_2, \quad y(t_n) \in S_M^{\text{test}},\\
    \hat y_{n+1} &= \Phi_{h,f_{\theta}}(y_n).
    \end{split}
  \end{equation}

 \textbf{Training data:} Training data is generated by sampling $N_2 = 300$ random initial values $y_0$ requiring that $0.3 \leq\| y_0\|_2\leq 0.6$. The data $S_{N_1,N_2}=\{ \tilde y_{n}^{(j)} \}_{n=0,j=0}^{N_1,N_2}$ is found by integrating the initial values with DOP853 with a tolerance of $10^{-15}$ for the following step sizes and number of steps: $(h,N_1) = (0.4,4),(0.2,8),(0.1,16)$. The points in the flow are perturbed by noise where $\sigma \in\{0,0.05\}$. Error is measured in $M=10$ random points in the flow, within the same domain as the initial values. Furthermore, experiments are repeated with a new random seed for the generation of data and initialization of neural network parameters five times in order to compute the standard deviation of the flow error. The flow error is shown in Figure \ref{fig:flow_error_noise}. Additional results are presented in Appendix \ref{appendix:more_num_results}.

 \textbf{Neural network architecture and optimization:}  For all test problems, the neural networks have $3$ layers with a width of $200$ neurons and $\text{tanh}( \cdot )$ as the activation function. The algorithms are implemented using PyTorch \cite{pytorch} and the code for performing ISO is a modification of the implementation by \cite{Chen2020Symplectic}\footnote{h\url{ttps://github.com/zhengdao-chen/SRNN} (CC-BY-NC 4.0 License)}. Training is done using the quasi-Newton L-BFGS algorithm \cite{nocedal1999numerical} for $20$ epochs without batching. Further details are provided in Appendix \ref{appendix:details_nn_train} and the code could be found at \href{https://github.com/hakonnoren/learning_hamiltonian_noise/}{github.com/hakonnoren/learning{\_}hamiltonian{\_}noise}.

  \begin{figure}[htb]
  \centering
    \includegraphics[trim=0 0 0 0,clip,width=1\textwidth]{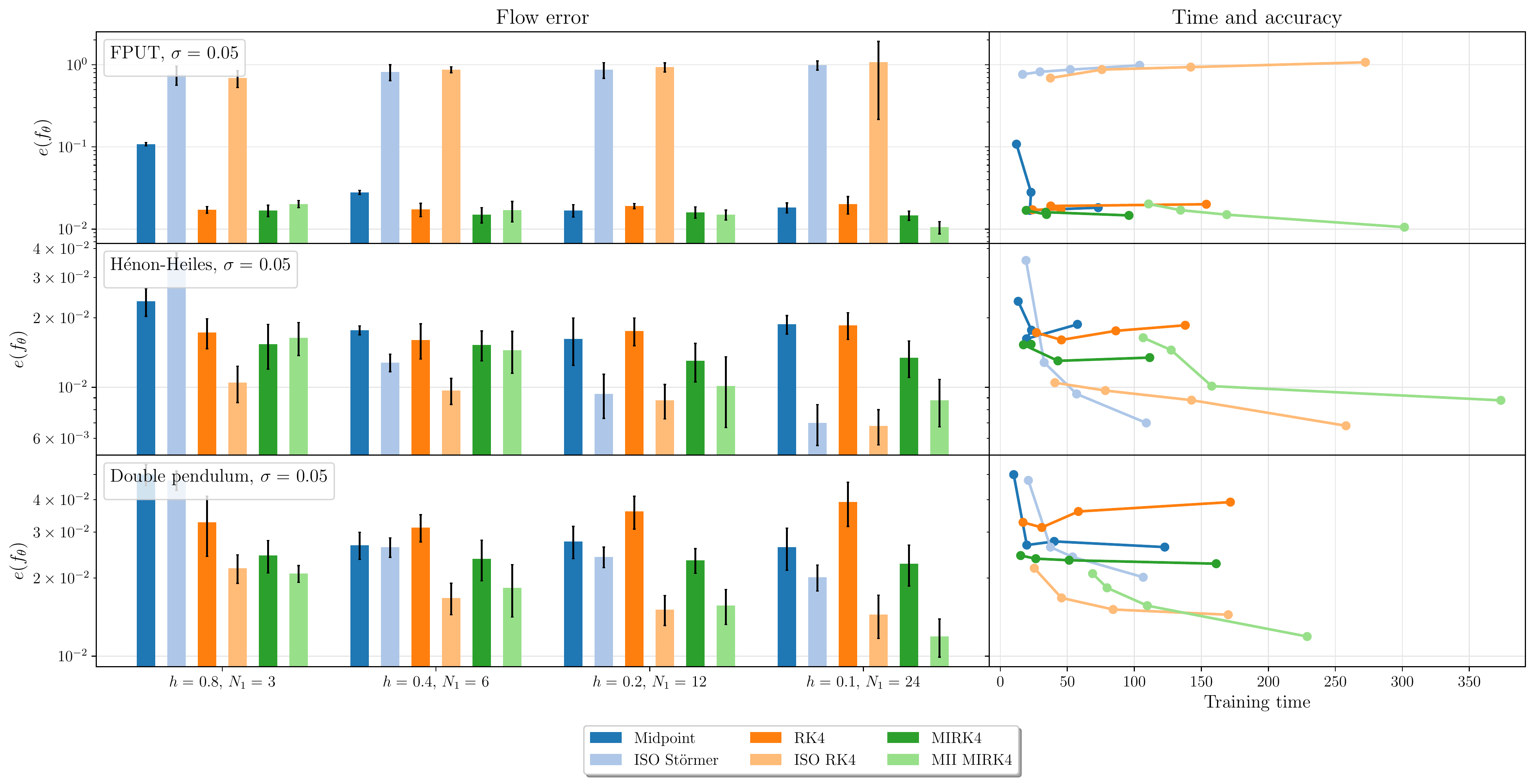}
  \caption{The flow error when learning vector fields using one-step methods directly (Midpoint, RK4 and MIRK4), ISO and multiple time-steps (ISO Störmer and ISO RK4) and MII (MII MIRK4). The error bars display the standard deviation after rerunning 5 experiments on data with $\sigma = 0.05$. The right subplot shows the computational time used in training against the flow error. }

    \label{fig:flow_error_noise}
\end{figure}

\textbf{Results:} As observed in Figure \ref{fig:flow_error_noise} and supported by the analytical result illustrated in Figure \ref{fig:propagation of noise}, the MII approach facilitates more accurate training from noisy data than one-step methods. However, training with multiple integration steps in combination with ISO yields lower errors when RK4 is used for the Hénon--Heiles problem and similar performance as MII on the double pendulum. We notice that the SRNN approach, i.e.\ ISO with Störmer--Verlet, is improved when switching to RK4, which means sacrificing symplecticity to achieve higher order. The results for FPUT stand out in Figure \ref{fig:flow_error_noise}, since both ISO methods have large errors here. The roll-out in time of the learned vector fields is presented in Figure \ref{fig:roll_out_time} in Appendix \ref{appendix:more_num_results}, where the same can be observed. As also could be seen here, the FPUT Hamiltonian gives rise to highly oscillatory trajectories, and the errors observed in Figure \ref{fig:flow_error_noise} might indicate that ISO is ill-suited for this kind of dynamical systems. 

Two observations could be made regarding the one-step methods without averaging or ISO. First, it is likely that the midpoint method has weaker performance for large step sizes due to its lower order, compared to both RK4 and MIRK4, despite the fact that it is a symplectic method. The same is clear from Figure \ref{fig:flow_error_no_noise} in Appendix \ref{appendix:more_num_results}, which display the flow error when training on data without noise. Secondly, building on the sensitivity analysis, we observe that MIRK4 consistently attains higher accuracy than RK4, as expected from the Monte-Carlo simulation found in Figure \ref{fig:propagation of noise}.

\section{Conclusion}

In this work we present the mean inverse integrator, which allows both chaotic and oscillatory dynamical systems to be learned with high accuracy from noisy data. Within this method, integrators of the MIRK class are a key component. To analyse how noise is propagated when training with MII and MIRK, compared to much used explicit methods such as RK4, we developed a sensitivity analysis that is verified both by a Monte-Carlo approximation and reflected in the error of the learned vector fields. Finally, we build on the SRNN \cite{Chen2020Symplectic} by replacing Störmer--Verlet with RK4, and observer increased performance. When also considering the weak performance of the implicit midpoint method, this tells us that order might be of greater importance than preserving the symplectic structure when training HNNs. Both the MIRK methods, the mean inverse integrator and initial state optimization form building blocks that could be combined to form novel approaches for solving inverse problems and learning from noisy data. 

\textbf{Limitations:} The experiments presented here assume that both the generalized coordinates $q_n$ and the generalized momenta $p_n$ could be observed. In a setting where HNNs are to model real and not simulated data, the observations might lack generalized momenta \cite{chen2021neural_symp} or follow Cartesian coordinates, requiring the enforcement of constraints \cite{celledoni23lho,SimplifyingHamiltonianandLagrangianNeuralNetworksviaExplicitConstraints}. Combining approaches that are suitable for data that is both noisy and follow less trivial coordinate systems is a subject for future research.

\newpage

\bibliographystyle{unsrtnat}
\bibliography{ref}

\appendix

\section{Test problems}\label{test_problems}

\textbf{Fermi--Pasta--Ulam--Tsingou:} This dynamical system is a model for a chain of  $2m +1 $ alternating stiff and soft springs connecting $2m$ mass points. The chain is fixed in both ends \cite{hairer2006geometric,fermi1955studies}. With the coordinate transformation suggested in \cite[Ch. I.5.I]{hairer2006geometric} we have coordinates $[q,p]^T \in \R^{4m}$ where $q_i, i = 1,\dots,m$ represents a scaled displacement of the $i$-th stiff spring and $q_{i+m}, i = 1,\dots,m$ represents a scaled expansion of the $i$-th spring. $q_i$ represents their velocities. Letting $\omega$ be the angular frequency of the stiff spring, in general the Hamiltonian is given by 
\begin{align*}
    H(q,p) =& \;  \frac{1}{2}\sum_{i=1}^m\big( p_i^2 + p_{i+m}^2 \big) + \frac{\omega^2}{2}\sum_{i=1}^mq_{i+m}^2 \\
    &+ \frac{1}{4}\bigg( \sum_{i=1}^{m-1}\big (q_{i+1} - q_{i+m+1} - q_{i}-q_{i+m}\big )^4 + (q_{1} - q_{m+1})^4 + (q_{m} + q_{2m})^4\bigg)
\end{align*}
We consider the most trivial case of $m=1$ and letting $\omega=2$, yielding the quartic, separable Hamiltonian by
\begin{align*}
    H(q_1,q_2,p_1,p_2) =& \frac{1}{2}\big( p_1^2 + p_2^2 \big) + 2q_2^2
    + \frac{1}{4}\bigg( \big(q_{1} - q_{2}\big)^4 + \big(q_{1} + q_{2}\big)^4 \bigg).
\end{align*}

\textbf{Double pendulum:} Let $q_i$ and $p_i$ denote the angle and angular momentum of pendulum $i = 1,2$. The double pendulum system has a Hamiltonian that is not separable, where  $y=[q_1,q_2,p_1,p_2]^T \in \R^4$ and the Hamiltonian is given by
\begin{equation*}
H(q_1,q_2,p_1,p_2) = \frac{\frac{1}{2}p_1^2 + p_2^2 - p_1p_2\cos(q_1-q_2)}{1+\sin^2(q_1-q_2)} - 2\cos(q_1) -\cos(q_2).
\end{equation*}

\textbf{Hénon--Heiles:} This model was introduced for describing stellar motion inside the gravitational potential of a galaxy, as described in \cite{hairer2006geometric}. This Hamiltonian is separable. However, it is a canonical example of a chaotic system and its properties are discussed more in detail in \cite{GoldsteinHerbert2014Cm}. The Hamiltonian is given by
  \begin{equation*}
    H(q_1,q_2,p_1,p_2) = \frac{1}{2}(p_1^2 + p_2^2) + \frac{1}{2}(q_1^2 + q_2^2) + q_1^2q_2 - \frac{1}{3}q_2^3.
  \end{equation*}

\section{Additional numerical results}\label{appendix:more_num_results}

Here we present additional numerical experiments. In Figure \ref{fig:flow_error_no_noise}, the flow error when learning from data without noise, could be found. The roll-out in time of the learned Hamiltonian for the FPUT and Hénon--Heiles problem is presented in Figure \ref{fig:roll_out_time}.

\begin{figure}[htb]
  \centering
    \includegraphics[trim=0 0 0 0,clip,width=1\textwidth]{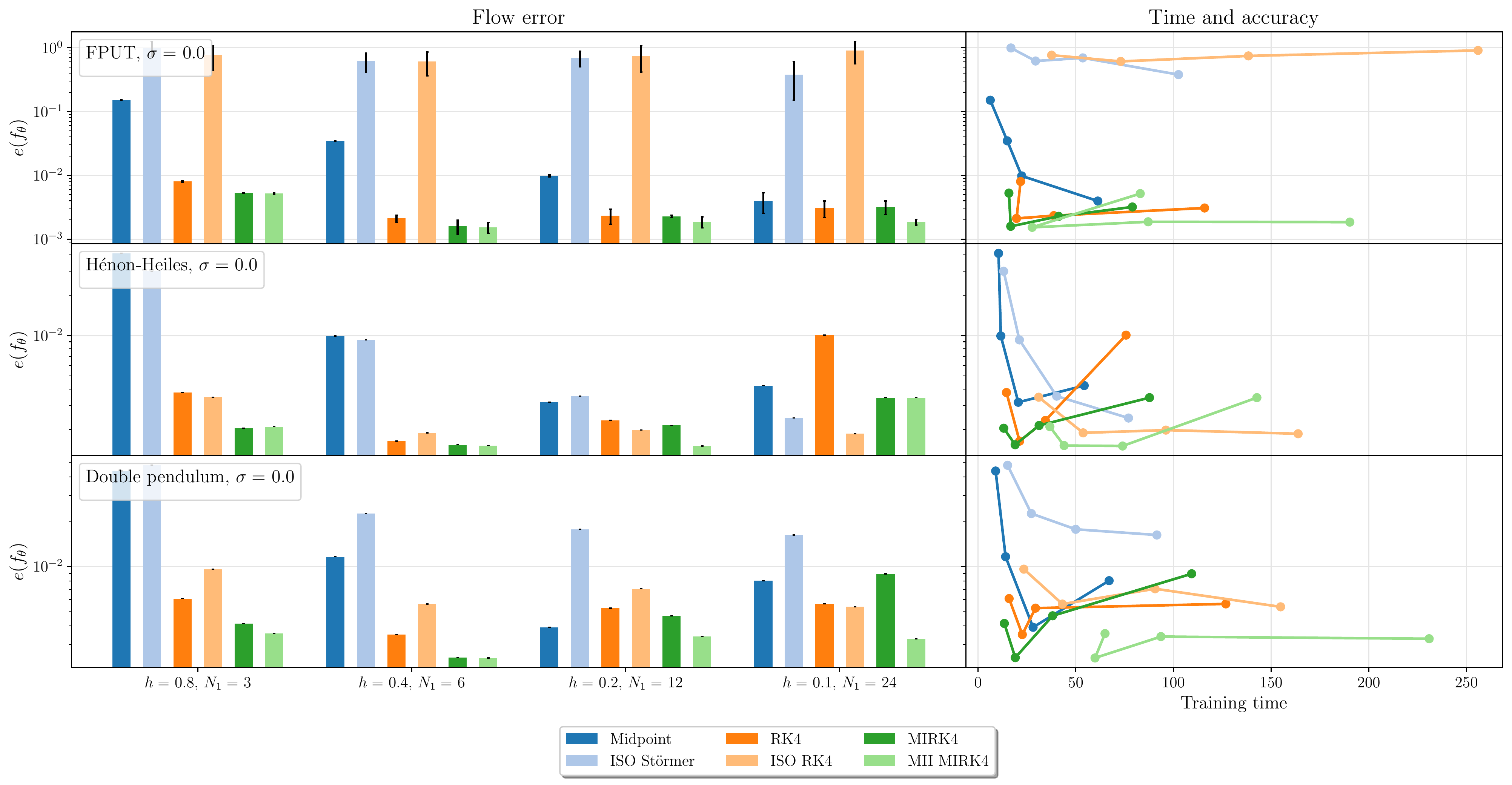}
  \caption{The flow error when learning vector fields using one-step methods directly (Midpoint, RK4 and MIRK4), ISO and multiple time-steps (ISO Störmer and ISO RK4) and MII (MII MIRK4). The error bars display the standard deviation after rerunning 5 experiments on data with $\sigma = 0$. The right subplot shows the computational time used in training against the flow error.}
    \label{fig:flow_error_no_noise}
\end{figure}

\begin{figure}[htb]
  \centering
    \includegraphics[trim=0 0 116 0,clip,width=0.45\textwidth]{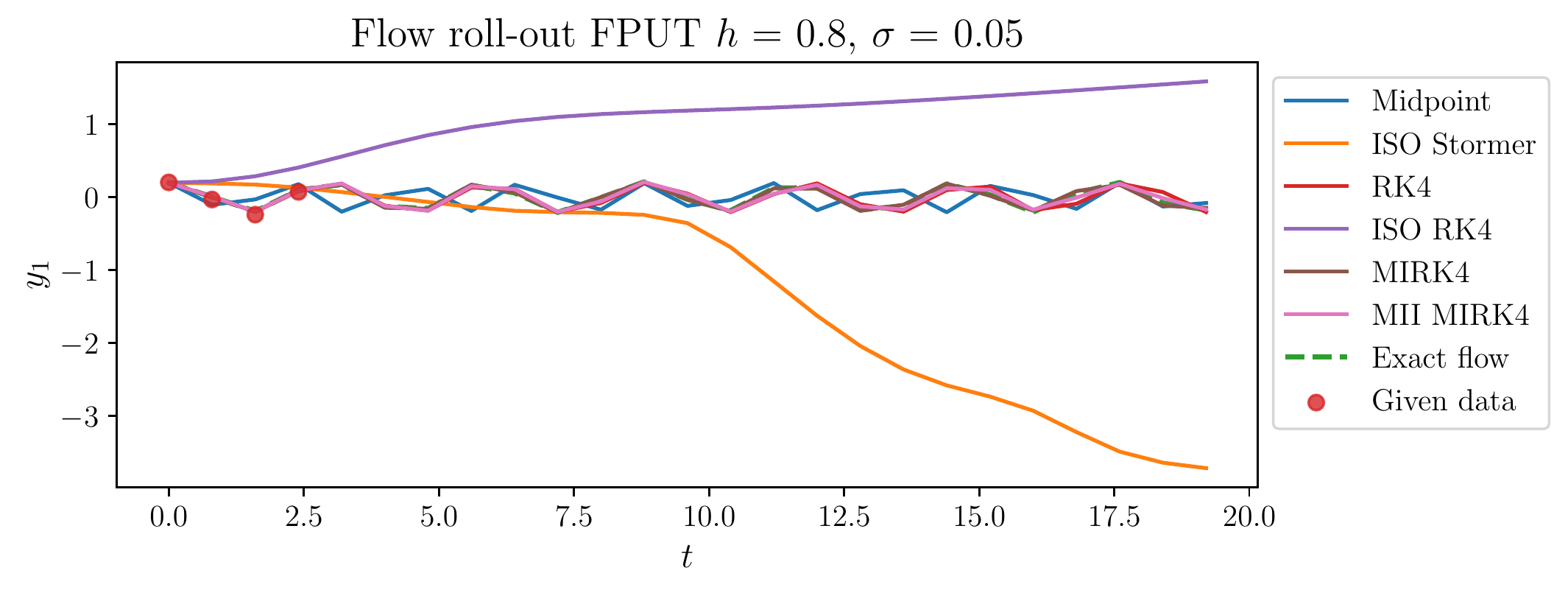}
    \includegraphics[trim=0 0 116 0,clip,width=0.45\textwidth]{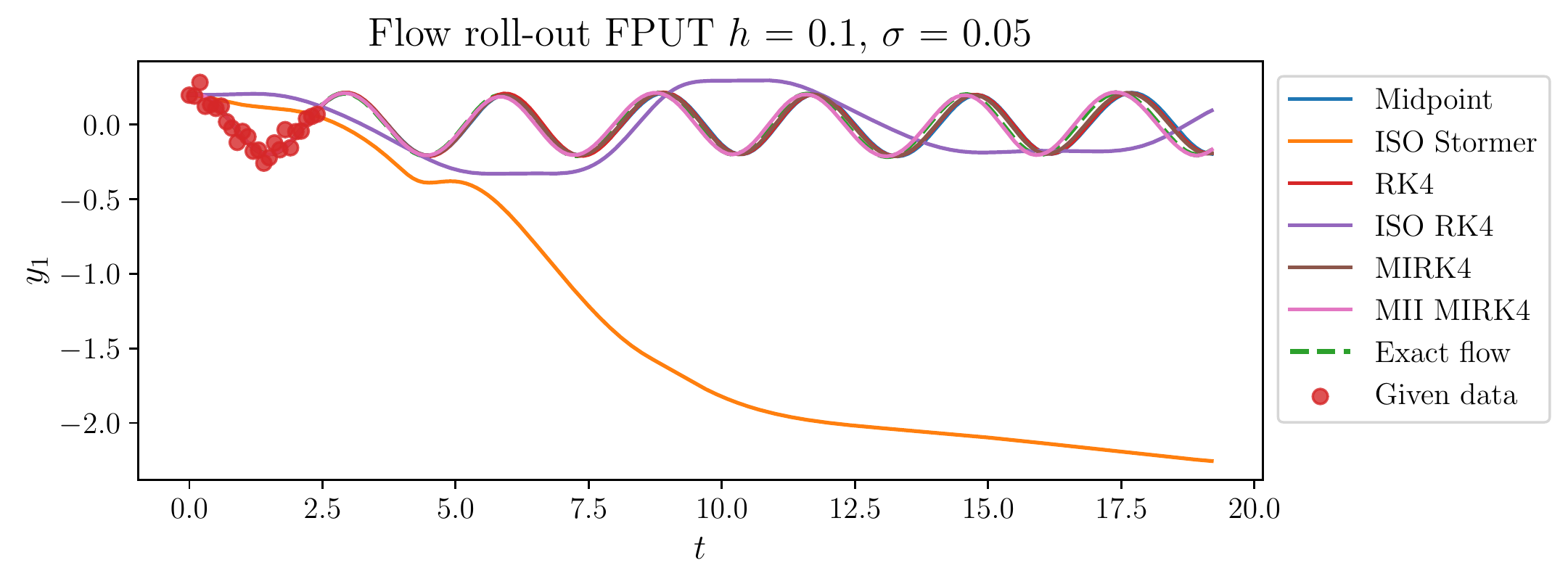}
    \includegraphics[trim=0 0 116 0,clip,width=0.45\textwidth]{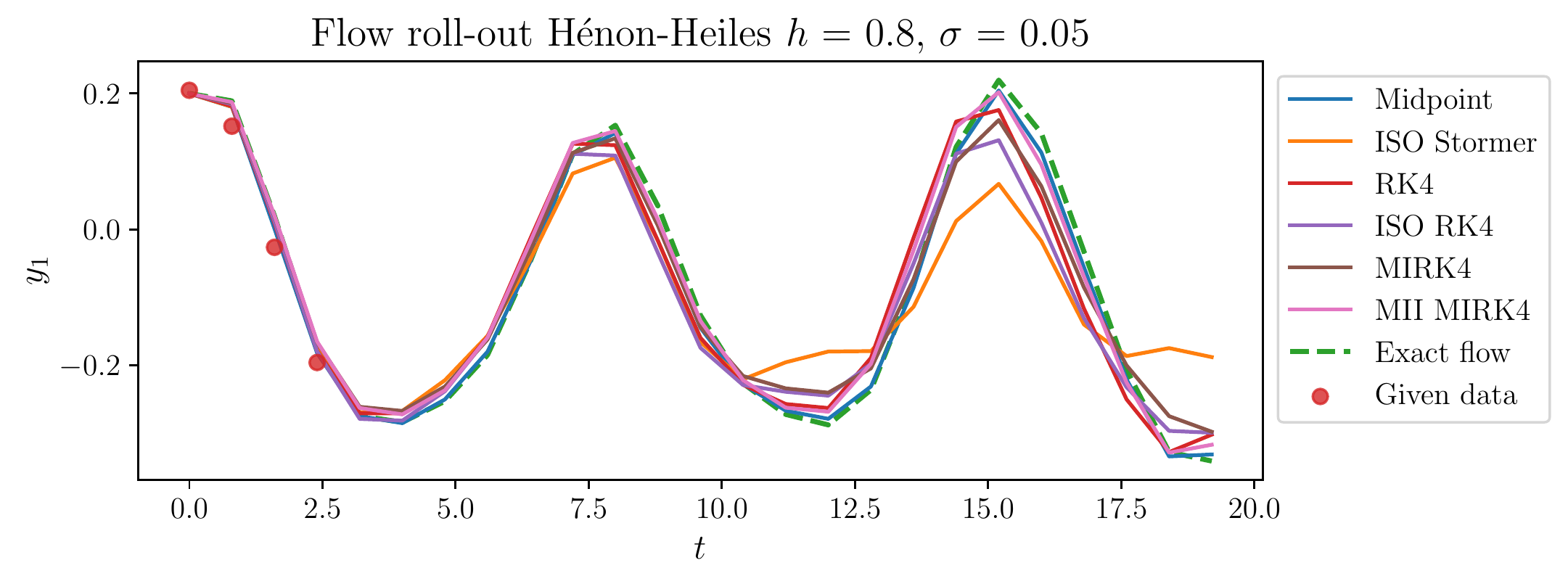}
    \includegraphics[trim=0 0 116 0,clip,width=0.45\textwidth]{plots/roll_out/roll-out_henon-heiles_j=1_sigma=0.05_dt=0.1.pdf}

    \includegraphics[trim=515 170 10 35,clip,width=0.1\textwidth]{plots/roll_out/roll-out_henon-heiles_j=1_sigma=0.05_dt=0.1.pdf}
    \includegraphics[trim=515 138 10 65,clip,width=0.1\textwidth]{plots/roll_out/roll-out_henon-heiles_j=1_sigma=0.05_dt=0.1.pdf}
    \includegraphics[trim=515 102 10 95,clip,width=0.1\textwidth]{plots/roll_out/roll-out_henon-heiles_j=1_sigma=0.05_dt=0.1.pdf}
    \includegraphics[trim=515 135 10 65,clip,width=0.1\textwidth]{plots/roll_out/roll-out-label.pdf}
  \caption{Roll-out in time obtained by integrating over the learned vector fields when training on data from the Fermi--Pasta--Ulam--Tsingou and Hénon--Heiles Hamiltonian.}
    \label{fig:roll_out_time}
\end{figure}

\section{More on numerical integration}\label{more_on_num_int}

\subsection{Runge--Kutta methods}
A general Runge--Kutta method for an autonomous system with $s$ stages is a one-step numerical integrator given by
\begin{equation}
  \begin{aligned}
      y_{n+1} &= y_n + h\sum_{j=1}^s b_i k_i,  \\ 
    k_i &= f\big (y_n + h\sum_{j=1}^s a_{ij} k_j \big ), \quad \quad i = 1,\dots,s.
    \label{rk_stage_form}
  \end{aligned}
\end{equation}
A concrete method is determined by specifying the coefficient matrix $A \in \R^{s\times s}$ and the vector $b \in \R^s$, and there are conditions for symplecticity and order associated with these \cite{hairer93sodI}. The conditions for order $p=1$ require that the coefficient $c\in\R^{s}$ is determined by $c_i = \sum_{j=1}^s a_{ij}$. A method could be compactly represented by a \textit{Butcher tableau} which structures the coefficients the following way
\begin{equation*}
  \def\arraystretch{1.3}
  \begin{tabular}{c|c}
         $c$ & $A$       \\
            \hline
              & $b^T$      \\
  \end{tabular}
\end{equation*}
The two symplectic and symmetric Gauss-Legendre methods (found e.g. in \cite{hairer2006geometric}) with order $p=4,6$ and denoted as GL4 and GL6 in Table \ref{integrator table} are presented in below:

  \begin{figure}[htb]
  \raggedleft
  \begin{equation*}
    \def\arraystretch{1.4}
    \begin{array}{c|cc}
            \hlf - \frac{\sqrt{3}}{6}  & \frac{1}{4} & \frac{1}{4} -  \frac{\sqrt{3}}{6}   \\
            \hlf + \frac{\sqrt{3}}{6}  & \frac{1}{4} + \frac{\sqrt{3}}{6}  & \frac{1}{4}    \\
              \hline
                & \hlf & \hlf     \\
    \end{array}
    \quad \quad
    \def\arraystretch{1.4}
    \begin{array}{c|ccc}
            \hlf - \frac{\sqrt{15}}{10}  & \frac{5}{36} & \frac{2}{9} -  \frac{\sqrt{15}}{15} & \frac{5}{36} -  \frac{\sqrt{15}}{30}   \\
            \hlf  &  \frac{5}{36} +  \frac{\sqrt{15}}{24} & \frac{2}{9} & \frac{5}{36} -  \frac{\sqrt{15}}{24}    \\
            \hlf + \frac{\sqrt{15}}{10}  & \frac{2}{9} +  \frac{\sqrt{15}}{15} & \frac{5}{36} +  \frac{\sqrt{15}}{30} & \frac{5}{36}   \\
            \hline
                & \frac{5}{18} & \frac{4}{9} & \frac{5}{18}      \\
    \end{array}
  \end{equation*}
\label{gl46}
\end{figure}

 \begin{table}[htb]
 \caption{Properties of RK methods. \textit{Symm.} is short for symmetric and \textit{sympl.} for symplectic, and inv. for inverse.}
 \label{integrator table}
 \vskip 0.15in
 \begin{center}
 \begin{small}
 \begin{tabular}{l | l  l l  l l l l}
 \toprule
    Integration method & Name in figures   & Order ($p$) & Stages ($s$) & Symm. & Sympl. & Inv. explicit & Explicit\\ 
 \midrule
     Explicit Euler & E. Euler & 1 & 1 &no &no &yes & yes \\
     Implicit Euler & I. Euler & 1 & 1 &no &no &yes & no\\
     Runge--Kutta 4 & RK4 & 4 & 4 &no &no &yes & yes\\
    Implicit midpoint & Midpoint & 2 & 1 &yes &yes &yes & no\\
    MIRK3 & MIRK3 & 3 & 2 &no &no&yes& no\\
     MIRK4 & MIRK4 & 4 & 3 &yes &no&yes& no\\
     MIRK5 & MIRK5 & 5 & 4 &no &no&yes& no\\
     MIRK6 & MIRK6 & 6 & 5 &yes &no&yes& no\\
    Gauss Legendre 4  & GL4 & 4 & 2 &yes &yes&no& no\\
    Gauss Legendre 6  & GL6  & 6 & 4 & yes &yes&no& no\\
 \bottomrule
 \end{tabular}
 \end{small}
 \end{center}
 \vskip -0.1in
 \end{table}

\subsection{Mono-Implicit Runge--Kutta methods}\label{more_on_mirk}

The MIRK methods are specified by a coefficient vector $b\in \R^s$, $v\in\R^s$ in addition to the strictly lower triangular matrix $D\in\R^{s\times s}$ and could be represented by the an extended Butcher tableau in the following manner
\begin{equation*}
  \def\arraystretch{1.3}
  \begin{tabular}{c|c|c}
         $c$ & $v$ & $D$       \\
            \hline
              & & $b^T$
  \end{tabular}
\end{equation*}
In \cite{burrage1994order} it is proved that the maximum order of an $s$-stage MIRK method is $p = s+1$ and several methods with stages $s \leq 5$ are presented. Below, we specify the MIRK methods used in the numerical experiments in addition to presenting their extended Butcher tableau in Figure \ref{example_mirk}.

\begin{itemize}
    \item Midpoint: The symmetric and symplectic MIRK method where $(s,p) = (1,2)$ is equivalent to the midpoint method.
    \item MIRK3: The method $(s,p)=(2,3)$ found by choosing $c_1 = 1$ in \cite{burrage1994order}.
    \item MIRK4: The method $(s,p)=(3,4)$ with $x_{31} = \frac{1}{8}$ in \cite{muir_optimal_nodate} and is first presented in in \cite{vanBokhoven1980efficient, Cash1980high}.
    \item MIRK5: The method $(s,p)=(4,5)$ presented in \cite{burrage1994order} choosing $c_2=0$ and $c_3=\frac{3}{2}$. It should be noted that as long as $c_3 > 1$ the method is A-stable, however the particular choice of $c_3=\frac{3}{2}$ is arbitrary.
\item MIRK6: The method $(s,p)=(5,6)$ presented in \cite{muir_optimal_nodate}, which is the $s=5$ stage scheme in \cite{burrage1994order} choosing $c_3 = \frac{1}{2} - \frac{\sqrt{21}}{14}$. According to \cite{muir_optimal_nodate}, this method is an improvement over earlier schemes on the same form which used $c_3 = \frac{1}{4}$.
\end{itemize}

\begin{figure}
  \raggedleft
\begin{align*}
  \def\arraystretch{1.2}
  \begin{array}{c|c|cc}
    1& 1 &0  & 0 \\
    \frac{1}{3} &  \frac{5}{9} &-\frac{2}{9} & 0 \\
    \hline
    & & \frac{3}{4} &\frac{1}{4} 
  \end{array}
\quad \quad
  \def\arraystretch{1.2}
  \begin{array}{c|c|ccc}
    0& 0 &0  & 0 & 0 \\
    1 &  1 &0 & 0 &0 \\
    \hlf& \hlf& \frac{1}{8} &  -\frac{1}{8} & 0 \\
    \hline
    & & \frac{1}{6} &\frac{1}{6}&\frac{2}{3} 
  \end{array}
  \quad \quad
  \def\arraystretch{1.2}
  \begin{array}{c|c|cccc}
0&0&0 & 0 & 0 & 0\\
1&1&0 & 0 & 0 & 0\\
\frac{3}{2}&0&\frac{3}{8} & \frac{9}{8} & 0 & 0\\
\frac{9}{20}&\frac{40257}{80000}&\frac{16929}{160000} & - \frac{5643}{32000} & \frac{693}{40000} & 0\\
    \hline
&&\frac{23}{162}& \frac{5}{22}&- \frac{2}{189}& \frac{4000}{6237}
\end{array}
\end{align*}
\begin{align*}
  \def\arraystretch{1.4}
\begin{array}{c|c|ccccc}
  0&0&0 & 0 & 0 & 0 & 0\\
  1&1&0 & 0 & 0 & 0 & 0\\
\hlf - \sqd &\hlf - \sqb&\fa +\sqa & -\fa +\sqa & 0 & 0 & 0\\
\hlf + \sqd&\hlf + \sqb&\fa -\sqa & -\fa -\sqa & 0 & 0 & 0\\
\hlf&\hlf&- \frac{5}{128} & \frac{5}{128} & \sqb& -  \sqb& 0\\
  \hline
  &&\frac{1}{20}&\frac{1}{20}&\frac{49}{180}&\frac{49}{180}&\frac{16}{45}
\end{array}
\end{align*}
\caption{Extended Butcher tableau of MIRK methods with stage and order $(s,p) = (2,3),(3,4),(4,5),(5,6)$.}
\label{example_mirk}
\end{figure}

\subsection{Symmetric methods:}
The exact flow of an ODE satisfies the following property known as (time) symmetry: \[y(t_0)=\varphi_{h,f}^{-1}(y(t_0 +h))=\varphi_{-h,f}(y(t_0 + h)),\]
where the superscript ``$-1$" denotes the inverse map. 
This is a desirable property also for the numerical approximation. A numerical integration method $\Phi_{h,f}$ is called symmetric if
\begin{equation}
    \label{symmetry}
\Phi_{h,f} = \Phi_{-h,f}^{-1}.
\end{equation}
Symmetric numerical methods have the following properties \cite{chartier_symmetric_2015}:
\begin{enumerate}
  \item A symmetric integrator preserves the (time) symmetry of the exact flow.
  \item The order $p$ of a symmetric method is necessarily even.
  \item Solutions of Hamiltonian systems satisfy the following reflection symmetry: if $\big(q(t),p(t)\big)$ solves the Hamiltonian ODE, then $\big(q(-t),-p(-t)\big)$ is also a solution, with $y(t) = [q(t),p(t)]^T$. Numerical solutions $(q_n,p_n)$ obtained from a symmetric Runge--Kutta method satisfy the same reflection symmetry \cite{leimkuhler_reich_2005}.
\end{enumerate}
 A Runge--Kutta method is symmetric if and only if 
\begin{align}
\label{symmetric runge kutta}
  PA + AP - \one b^T &= 0,\\
  b &= Pb,
\end{align}
where $\one := [1,\dots,1]^T \in\R^{s}$ and $[p]_{ij} = \delta_{i,s+1-j}$ \cite{chartier_symmetric_2015}. That is, $P$ is the reflection of the identity matrix over the first axis. Inserting the definition of a MIRK method from \eqref{mirk-stages}, we get
\begin{align*}
  PD + DP + (Pv + v - \one)b^T &= 0\\
  b &= Pb.
\end{align*}
Symmetric MIRK methods of order $p = 2,4,6$ are presented in \cite{Cash1982high, muir_optimal_nodate} and specific examples are found in Figure \ref{example_mirk}.

\section{Details on the inverse injection in MII}\label{appendix:detail_of_mii}
Assume we are deriving the MII following the example in Equation \eqref{averaging_trajectories} using the implicit midpoint method, where
 \[y_{n+1} =  y_n + hf\big( \frac{y_n +  y_{n+1}}{2} \big) = y_n + h\Psi_{n,n+1}. \]
We thus find that the second term in \eqref{averaging_trajectories}, the composition of two steps starting in $\tilde y_0$ could be approximated by
  \begin{align*}
   \hat y_2 &=\Phi_{h,f} \circ \Phi_{h,f}(\tilde y_{0})\\
   &= \Phi_{h,f}(\tilde y_{0}) + hf\bigg (\frac{ \Phi_{h,f}(\tilde y_{0}) + \hat y_2}{2} \bigg )\\
    &\approx \Phi_{h,f}(\tilde y_{0}) + hf\bigg ( \frac{\tilde y_1 + \tilde y_2}{2}\bigg  )\\
    &= \tilde y_0 + hf\bigg ( \frac{\tilde y_0 + \Phi_{h,f}(\tilde y_{0})}{2} \bigg ) + h\Psi_{1,2}\\
    &\approx \tilde y_0 + hf\bigg ( \frac{\tilde y_0 + \tilde y_1 }{2} \bigg ) + h\Psi_{1,2}\\
    &= \tilde y_0 + h\Psi_{0,1} + h\Psi_{1,2}.\\
  \end{align*}
where the approximation $\approx$ is obtained by the substitution $\tilde y_2  \rightarrow \hat y_2 $ and  $\tilde y_1 \rightarrow \Phi_{h,f}(\tilde y_{0})$. The same procedure (repeatedly using the inverse injection) is generalized over longer trajectories and used to arrive at the MII method in Definition \ref{mii_equation}.

\section{Details on neural network training}\label{appendix:details_nn_train}

The experiments were performed on a Apple M1 Pro chip with double precision. The PyTorch L-BFGS \cite{pytorch} algorithm is run with the following parameters:
\begin{itemize}
    \item History size: $120$.
    \item Gradient tolerance: $10^{-9}$.
    \item Termination tolerance on parameter changes: $10^{-9}$.
    \item Line search: Strong Wolfe.
\end{itemize}

Both MII and ISO work better when $f_{\theta}$ has been pre-trained to be a reasonable approximation of the underlying vector field $f$. Thus, for both MII and ISO training is run $10$ epochs on the one-step method before training additional $10$ epochs with MII (MII MIRK4) and ISO (ISO Störmer and ISO RK4). The ISO procedure (searching for the optimal initial value $\hat y_0$) utilizes the L-BFGS optimization algorithm from the SciPy library \cite{2020SciPyNMeth} with gradient tolerance of $10^{-6}$ and the maximum number of iterations limited to $10$.

\section{Proof of Theorem \ref{max_ord_symp}}\label{proof_max_order}

\begin{proof}
  As stated in Equation \eqref{symplectic_condition_rk} Runge--Kutta method as given by Equation \eqref{rk_stage_form} is symplectic if and only if
\begin{equation*}
  b_i a_{ij} + b_j a_{ji} - b_i b_j = 0.
\end{equation*}
Inserting the particular form for the MIRK coefficients, $a_{ij} = d_{ij} + v_ib_j$, we get 
\begin{align}
  b_i(v_i b_j + d_{ij}) + b_j(v_j b_i + d_{ji}) - b_ib_j &= 0 \nonumber \\
  b_i d_{ij} + b_j d_{ji} + b_i b_j (v_j + v_i - 1) &= 0.
  \label{mirk_proof_1}
\end{align}
As $D$ is strictly lower triangular eiter $d_{ji} = 0$ or $d_{ij} = 0$, which for Equation \eqref{mirk_proof_1} implies that
\begin{align*}\begin{cases}
 b_i d_{ij} + b_i b_j (v_j + v_i - 1) = 0 &\quad\quad  \text{if} \; i \neq j\\
´ b_i^2 (2v_i - 1) = 0 &\quad\quad \text{if} \; i = j
  \end{cases}
\end{align*}
Requiring $d_{ij}, b_i$ and $v_i$ to satisfy the symplecticity condition yields the following restriction
\begin{equation}
\begin{aligned}
\begin{cases}
b_i d_{ij} + b_i b_j (v_j + v_i - 1) = 0  &\quad \text{if} \; i \neq j,\\
b_i = 0 \;\; \text{or} \;\; v_i = \frac{1}{2} & \quad \text{if} \; i = j.
\label{symp_mirk_req}
  \end{cases}
\end{aligned}
\end{equation}
Without loss of generality, we assume that the $m$ first entries of $b\in\R^s$ are zero. Enforcing the conditions of Equation \eqref{symp_mirk_req} on $v\in\R^s$ we get for $1 \leq m\leq s$
\begin{align*}
  b &= [0,\dots,0,b_{m+1},\dots,b_s]^T,\\
  v &= [v_1,\dots,v_m,\hlf,\dots,\hlf]^T.
\end{align*}
In total, this gives the following constraints for $v,b,D$:
\begin{equation*}
    \begin{array}{c|c|cc}
    
      \;    & b_j = 0 & b_j \neq 0 \\
      \hline
              &                       &           \\
      b_i = 0 &         d_{ij} \in \R & d_{ij} \in \R \\
              &         v_i, v_j \in \R  & v_i,v_j \in \R \\
              &                       &           \\
      \hline
      &                       &           \\
      b_i \neq 0&          d_{ij} = 0 & d_{ij} =0 \\
      &             v_i,v_j \in \R   & v_i, v_j = \frac{1}{2}  \\
                &                       &           \\
    \end{array}
    \end{equation*}

Which for the Runge--Kutta method $A = D + vb^T$ gives a (RK) Butcher tableau of the form
  \begin{equation*}
    \def\arraystretch{1.2}
      \begin{array}{c|cccc|ccc}
      
                & 0 & 0 &  \hdots & 0    & v_1 b_{m+1}  &  \hdots & v_1 b_s        \\
                & d_{21} & 0  & \hdots & 0    &  &     &         \\
                & d_{31} & d_{32} &  & 0    & \vdots &      & \vdots        \\
                & \vdots &    & \ddots &     &   &   &       \\
                & d_{m,1} & \hdots  & d_{m,m-1}& 0    & v_{m} b_{m+1}  &  \hdots & v_m b_s        \\
                \hline 
                & 0 & \hdots  & \hdots   & 0    & \hlf b_{m+1}  &  \hdots & \hlf  b_s        \\
                & \vdots &   &    & \vdots    & \vdots &      & \vdots        \\
                & 0 & \hdots & \hdots &    0    &\hlf  b_{m+1} &  \hdots &\hlf  b_s        \\
                \hline
                & 0 & \hdots & \hdots &  0    &  b_{m+1}  &  \hdots &  b_s        \\
      \end{array}
      \end{equation*}
      Since the lower left submatrix is the zero matrix, this leaves the stages $k_{m+1},\dots,k_s$ unconnected to the first $m$ stages. In addition, as $b_i = 0$ for $i=1,\dots,m$, these stages are not included in the computation of the final integration step. The method is thus reducible to the lower right submatrix of $A$ and $b_{m+1},\dots,b_s$.
The reduced method is thus in general given by the following stage-values
\begin{align*}
    k_i = f\big(y_n + \frac{h}{2}\sum_{j}^s b_jk_j\big).
\end{align*}
  It is trivial to check that if $\sum_{i}^s b_i = 1$ the method satisfies order conditions up to order $p=2$, which could be found in \cite{hairer2006geometric} to be
  \begin{align*}
      \sum_{i} b_i &= 1, \quad \quad \text{and} \quad \quad
      \sum_{i,j} b_i a_{ij} = \frac{1}{2}.
  \end{align*}
  However, the method fails to satisfy the first of the two conditions required for order $p=3$, since
  \begin{align*}
      \sum_{i,j,k} b_i a_{ij} a_{ik} = \frac{1}{4} \sum_{i,j,k} b_i b_j b_k = \frac{1}{4} \neq \frac{1}{3}.
  \end{align*}
  Hence, the maximum order of a symplectic MIRK method is $p=2$.
\end{proof}
As a remark, it should be noted that the $s=1$ stage, symplectic MIRK method found by setting $b_1 = 1$, $v_1 = \frac{1}{2}$ and $d_{11} = 0$ is simply the midpoint method 
   $ y_{n+1} = y_n + hk_1$
 with   $k_1 = f(\frac{y_n + y_{n+1}}{2})$.

\section{Proof of Theorem \ref{thm:propagation of noise mirk}}\label{proof:propagation of noise mirk}
\begin{proof}
  Let $s_i(y_n,y_{n+1}) := y_n + v_i(y_{n+1} - y_n)$ and $\tilde y_n$ be noisy data \eqref{noisy_obs}. Observe that we can obtain the following approximation to the MIRK stages \eqref{mirk-stages} by
  \begin{align}
    k_i &= f\bigg(\tilde y_n + v_i(\tilde y_{n+1} - \tilde y_n) + h\sum_{j=1}^s d_{ij} k_j \bigg)\nonumber \\
    &= f\bigg (s_i(y_{n},y_{n+1}) + s_i(\delta_{n},\delta_{n+1}) + \mathcal O(h) \bigg )\nonumber  \\
    &= f(s_i(y_{n},y_{n+1})) + f'\big(s_i(y_{n},y_{n+1})\big)s_i(\delta_{n},\delta_{n+1}) + \mathcal O(\|s(\delta_n,\delta_{n+1}) \|^2) + \mathcal O(h)\nonumber \\
    &= f(y_n) + f'(y_n)s_i(\delta_{n},\delta_{n+1}) + \mathcal O(\|s(\delta_n,\delta_{n+1}) \|^2) + \mathcal O(h).
    \label{mirk_stage_approx}
  \end{align}
Where in the final equality we expand $y_{n+1} = y_n + hf(y_n) + \mathcal O(h^2)$ to find
\begin{align*}
  f(s_i(y_n,y_{n+1})) &= f(y_n + v_i(y_{n+1} - y_n))\\
  &= f(y_n + hv_i f(y_n) + \mathcal O(h^2))\\
  &= f(y_n) + \mathcal O(h).
\end{align*}
And similarly for $f'(s_i(y_n,y_{n+1}))$. In total, this means that the next MIRK-step could be approximated by
\begin{align}
  \label{mii_approx}
  y_{n+1} &= \tilde y_n + h\sum_{i=1}^s b_i k_i \nonumber \\
  &= \tilde y_n +  h\sum_{i=1}^s b_i \bigg ( f(y_{n}) + f'(y_n)s_i(\delta_{n},\delta_{n+1})\bigg ) + \mathcal O(h\|s(\delta_n,\delta_{n+1})^2 \|) + \mathcal O(h^2).
\end{align}
First note, that if $x$ is a multivariate normally distributed random variable $x \sim \mathcal N(0,\Sigma)$ then for a matrix $G \in \R^{n\times n}$ the variance of the linear transformation is given by $\V[Gx] := \text{Cov}[Gx,Gx] =  G\Sigma G^T $. Now, using the approximation in Equation \eqref{mii_approx}, we find the variance of the optimization target $\mathcal T^{\text{OS}}_{n+1} =\tilde y_{n+1} - \Phi_{h,f}(\tilde y_{n}, \tilde y_{n+1})$ by
\begin{align*}
  \V \bigg [ \tilde y_{n+1} - \tilde y_n - h\sum_{i=1}^s b_i k_i \bigg] \approx&   \V \bigg [ \delta_{n+1} - \delta_n - h\sum_{i=1}^s b_i \bigg ( f'(y_n)s_i(\delta_{n},\delta_{n+1})\bigg ) \bigg] \\
  =& \;  \V \bigg [ \bigg (I - hb^Tv f'(y_n)\bigg )\delta_{n+1} - \bigg (I + h b^T(\one - v) f'(y_n) \bigg )\delta_n \bigg] \\
  =& \; \sigma^2   \bigg (I - hb^Tv f'(y_n)\bigg ) \bigg (I - hb^Tv f'(y_n)\bigg )^T \\
  &+  \sigma^2  \bigg (I + hb^T(\one - v) f'(y_n) \bigg ) \bigg (I + hb^T(\one - v) f'(y_n) \bigg )^T \\
  =& \; \sigma^2 \bigg [2I + hb^T(\one - 2v)\big(f'(y_n)\! +\! f'(y_n)^T\big) \\&+ h^2\underbrace{\bigg ((b^Tv)^2 + (b^T(\one - v))^2\bigg)f'(y_n)f'(y_n)^T}_{:= Q^{\text{OS}}}  \bigg ]\\
  =& \; \sigma^2 \bigg [2I + hb^T(\one - 2v)\big(f'(y_n) \! + \! f'(y_n)^T\big) + h^2 Q^{\text{OS}}  \bigg ].\\
\end{align*} 
Here, $\one := [1,\dots,1]^T\in\R^s$. This is the variance estimate we wanted to find for MIRK methods used as one-step integration schemes.
Similarly, considering a point computed by the mean inverse integrator  $\overline y_n$, we find, using the stage approximation by Equation \eqref{mirk_stage_approx} that
\begin{align*}
  \overline y_n &= \frac{1}{N}\sum_{\substack{j = 0 \\j\neq n} }^{N} \tilde y_j +  \frac{h}{N}\sum_{j=0}^{N-1} w_{n,j}  \sum_{l=1}^s b_l k_l\\
  &\approx \frac{1}{N}\sum_{\substack{j = 0 \\j\neq n} }^{N} \tilde y_j +  \frac{h}{N}\sum_{j=0}^{N-1} w_{n,j}   \sum_{l=1}^s b_l \bigg ( f(y_j) + f'(y_j)s_l(\delta_{j},\delta_{j+1})\bigg )
\end{align*}
Where we note that $w_{n,j} := [W]_{nj}$, from Definition \ref{mii_equation} of the MII. Let $\overline y_n := [\overline Y]_n$. Computing the variance of the optimization target $\mathcal T^{\text{MII}}_{i} = \tilde y_n - \overline y_n$ we find, by introducing $\overline P_{nj}$ to simplify notation, that
\begin{align*}
  \V \bigg [ \tilde y_{n} - \overline y_{n} \bigg] \approx&   \V \bigg [ \delta_{n} - 
    \frac{1}{N}\sum_{\substack{j = 0 \\j\neq n} }^{N} \delta_j -  \frac{h}{N}\sum_{j=0}^{N-1} w_{n,j}  f'(y_j) \bigg ( b^T(\one - v)\delta_j + b^Tv \delta_{j+1}  \bigg)\bigg ] \\
    =&  \V \bigg [ \frac{1}{N} \sum_{\substack{j = 0 \\j\neq n}}^N \bigg (I + h\underbrace{f'(y_j)\bigg ( \tilde w_{n,j} b^T(\one - v) + \tilde w_{n,j-1}b^Tv  \bigg)}_{:=\overline  P_{nj}} \bigg   )\delta_j \bigg ]\\
    &+ \V \bigg [\bigg ( I -  \frac{h}{N} \underbrace{ f'(y_n)\bigg ( \tilde w_{n,n} b^T(\one - v) + \tilde w_{n,n-1}b^Tv  \bigg   )}_{=\overline P_{nn}} \bigg ) \delta_n \bigg ]\\
    =& \V \bigg [ \frac{1}{N} \sum_{\substack{j = 0 \\j\neq n}}^N \bigg (I + h\overline P_{nj} \bigg   )\delta_j \bigg ]
    +  \V \bigg [\bigg ( I -  \frac{h}{N} \overline P_{nn}\bigg ) \delta_n \bigg ]
    \\
    =&  \frac{\sigma^2 }{N^2} \sum_{\substack{j = 0 \\j\neq n}}^N  \bigg (  I + h\overline  P_{nj}  \bigg ) \bigg (I + h\overline P_{nj} \bigg )^T
    + \sigma^2 \bigg ( I -  \frac{h}{N} \overline P_{nn}\bigg )\bigg( I -  \frac{h}{N} \overline P_{nn}\bigg )^T.
  \end{align*}
 In the second line, $\tilde w_{n,j}$ is introduced which is elements of a matrix $\tilde W = [ 0 | w_1 | w_2 | \dots | w_N | 0 ] \in \R^{N \times N+1}$, or in other words the matrix you obtain by padding $W$ right and left with a column of zeros.
  Expanding the terms and introducing matrices $P_{nj}$ and $Q^{\text{MII}}$ we finally find
    \begin{align*}
      \V \bigg [ \tilde y_{n} - \overline y_{n} \bigg] \approx& \frac{\sigma^2}{N} \bigg[ (1 + N)I + h\underbrace{(\overline P_{nn} + \overline P_{nn}^T)}_{= P_{nn}} +  \frac{h}{N}\sum_{\substack{j = 0 \\j\neq n}}^s\underbrace{( \overline P_{nj} + \overline P_{nj}^T)}_{:= P_{nj}} + \frac{h^2}{N} \underbrace{   \sum_{j = 0}^s \overline P_{nj}\overline P_{nj}^T}_{:= Q^{\text{MII}}} \bigg]\\
    =& \frac{\sigma^2}{N} \bigg[ (1 + N)I + hP_{nn} +  \frac{h}{N}\sum_{\substack{j = 0 \\j\neq n}}^s P_{nj} + \frac{h^2}{N} Q^{\text{MII}} \bigg].
\end{align*} 
Since for a symmetric matrix $A$ we have that the spectral radii $\rho$ (largest absolute value of eigenvalues) could be found by $\rho(A) = \|A\|_2$, we find for both variance approximations (covariance matrix is always symmetric) that
    \begin{align*}
      \rho \bigg( \V \big [ \tilde y_{n} - \overline y_{n} \big]\bigg ) &\approx 
    \frac{\sigma^2}{N} \bigg \|  (1 + N)I + hP_{nn} +  \frac{h}{N}\sum_{\substack{j = 0 \\j\neq n}}^s P_{nj} + \frac{h^2}{N} Q^{\text{MII}}  \bigg \|_2\\
      \rho \bigg(  \V \big [ \tilde y_{n+1} - \Phi_{h,f}(\tilde y_n,\tilde y_{n+1}) \big] \bigg )&\approx
        \sigma^2 \bigg \|2I + hb^T(\one - 2v)\big(f'(y_n) \! + \! f'(y_n)^T\big) + h^2 Q^{\text{OS}} \bigg \|_2 .\\
\end{align*}

Finally, we note that:

\begin{equation}
\begin{aligned}
  Q^{\text{OS}} &:=\bigg ((b^Tv)^2 + (b^T(\one - v))^2\bigg)f'(y_n)f'(y_n)^T\\
 \overline P_{nj} &:= f'(y_j)\bigg ( \tilde w_{n,j} b^T(\one - v) + \tilde w_{n,j-1}b^Tv  \bigg   )\\
 P_{nj} &:= \overline P_{nj} + \overline P_{nj}^T\\
 Q^{\text{MII}} &:=   \sum_{j = 0 }^s \overline P_{nj}\overline P_{nj}^T.
\end{aligned}
   \label{variance_matrices}
\end{equation}

\end{proof}

\section{Higher-order inverse-explicit invariant-preserving symmetric non-partitioned integrators}\label{appendix:higher_order_inv_explicit}
We define invariant-preserving integrators as methods that preserve the Hamiltonian or other invariants of the exact solution, either exactly up to machine precision or within a bound, like symplectic methods. Although we argue in this paper that symplecticity is a less important property when learning Hamiltonian systems from data than for integration of a known system, we do not mean to suggest that invariant-preserving integrators may not be beneficial to some extent and have important qualities in the inverse problem also. However, we urge anyone who seeks to use invariant-preserving methods to also consider the order of the method and whether it is a symmetric inverse-explicit method. Although the maximum order of a symplectic inverse-explicit \textit{Runge--Kutta} method is two, there exist higher-order inverse-explicit invariant-preserving integrators that are not Runge--Kutta methods.

Note that \textit{partitioned} Runge--Kutta (PRK) methods is an extension that does not belong to the class of Runge--Kutta methods. This is important to clarify since there exist PRK methods that are symmetric and explicit for separable systems. This marks a distinction from non-partitioned RK methods: these cannot be symmetric and explicit in general \cite{hairer2006geometric}. Several papers suggest using symplectic PRK methods for learning Hamiltonian systems \cite{Chen2020Symplectic, dipietro2020sparse, desai2021variational}, but these methods, although symmetric, only depend on one point to approximate the right-hand side of each integration step, and thus do not average out any noise.

\subsection{Symplectic elementary differential Runge--Kutta methods}
Chartier et al.\ showed in \cite{Chartier07} that an integrator can be applied to a modified vector field in such a way that it yields a higher-order approximation of the original vector field while inheriting the geometric properties of the given integrator. As an example, they present the fourth-order modified implicit midpoint method
\begin{equation}\label{eq:imp4}
\frac{y_{n+1}-y_n}{h} = f(\bar{y}) + \frac{h}{12} \big(-Df(\bar{y})Df(\bar{y})f(\bar{y}) + \frac{1}{2}D^2 f(\bar{y}) f(\bar{y}) f(\bar{y}) \big),
\end{equation}
where $\bar{y} = (y_n+y_{n+1})/2$. This is an example of an elementary differential Runge--Kutta (EDRK) method \cite{uria1995metodos}, which relies on the calculation of (multi-order) derivatives of the vector field $f$, denoted here as $D^p f$ for order $p$. Automatic differentiation can be utilized also to get higher-order derivatives, and we note that $f$, $Df$ and $D^2f$ each only have to be evaluated once for each training step since they are only evaluated at the one point $\bar{y}$. A sixth-order modification of the implicit midpoint method is also presented in \cite{Chartier07}, but that requires the calculation of up to fourth-order derivatives and might be considered prohibitively expensive.

\subsection{Discrete gradient methods}
Discrete gradient methods are a class of integrators that can preserve an invariant, e.g.\ the Hamiltonian, exactly \cite{mclachlan1999geometric}. This is in contrast to symplectic methods, which only preserve a perturbation of the invariant exactly and the exact invariant within some bound. We remark that no method can be both symplectic and exactly invariant-preserving in general \cite{Zhong1988Lie}. Discrete gradient methods are defined strictly for invariant-preserving ODEs, which can be written on the form
\begin{equation}\label{eq:invariantode}
\dot{y} = S(y) \nabla H(y),
\end{equation}
for some skew-symmetric matrix $S(y)$ \cite{mclachlan1999geometric}. Then a discrete gradient is a function $\overline{\nabla} H : \mathbb{R}^d \times \mathbb{R}^d \rightarrow \mathbb{R}$ satisfying
\begin{equation*}
\overline{\nabla} H(u,v)^T (u-v) =  H(u)-H(v),
\end{equation*}
a discrete analogue to the invariant-preserving property $\dot{H}(y) = \nabla H(y)^T\dot{y} = 0$ of \eqref{eq:invariantode}. A corresponding discrete gradient method is then given by
\begin{equation}\label{eq:dgm}
\frac{y_{n+1}-y_n}{h} = \overline{S}(y_n,y_{n+1},h) \overline{\nabla} H(y_n,y_{n+1}),
\end{equation}
for some approximation $\overline{S}(y_n,y_{n+1},h)$ of $S(y)$ such that $\overline{S}(y,y,0) = S(y)$, where $h$ is the step size in time. A discrete gradient can at most be a second-order approximation of the gradient, but appropriate choices of $\overline{S}$ can yield inverse-explicit integrators of arbitrarily high order \cite{eidnes2022order}. Matsubara et al.\ have developed a discrete version of the automatic differentiation algorithm that makes it possible to efficiently calculate a discrete gradient of neural network functions, and demonstrated its use in training of HNNs \cite{DeepEnergy-BasedModelingofDiscrete-TimePhysics} and for detecting invariants \cite{Matsubara2022finde}. A fourth-order discrete gradient method is suggested for training HNNs in \cite{eidnes2022order}, given a constant $S$ in \eqref{eq:invariantode}. This is the scheme \eqref{eq:dgm} with
\begin{equation*}
\overline{S}(y_n,\cdot,h) = S + \frac{8}{9} h S Q(y_n,z_2) S - \frac{1}{12} h^2 \, SD^2 H(z_1)S D^2 H(z_1)S,
\end{equation*}
with $z_1 = y_n + \frac{1}{2} h f(y_n)$, $z_2 = y_n + \frac{3}{4}h f(z_1)$ and $Q(u,v) := \frac{1}{2} (D_2 \overline{\nabla}H (u,v)^T - D_2 \overline{\nabla}H(u,v)),$ where $D_2 \overline{\nabla}H$ denotes the derivative of $\overline{\nabla}H$ with respect to the second argument, and $D^2H :=D\nabla H$ is the Hessian of $H$. This is not symmetric, so we propose here instead the \textit{fourth-order symmetric invariant-preserving scheme} obtained by
\begin{align*}
\overline{S}(y_n,y_{n+1},h) = & \, S + \frac{h}{2} S \big(Q(y_n,\frac{1}{3}y_n+\frac{2}{3}y_{n+1})-Q(y_{n+1},\frac{2}{3}y_n+\frac{1}{3}y_{n+1})\big) S \\
& \, - \frac{1}{12} (h)^2 \, SD^2 H(\bar{y})S D^2 H(\bar{y})S.
\end{align*}

\subsection{Numerical comparison of fourth-order integrators}

We test four different fourth-order integrators on solving an initial value problem of the double pendulum described in Appendix \ref{test_problems}. We compute an approximation of the error of the solution at each time by comparing to a solution obtained using RK4 with 10 times as many time steps. As seen in the left plot of Figure \ref{fig:integration}, the symmetric methods are clearly superior to the explicit RK4 method, when using the same step size. For integration, the advantage of RK4 is that it is more computationally efficient than the implicit methods, which facilitates taking smaller step sizes. However, as pointed out in Section \ref{sec:inverse_problems}, RK4 does not have this advantage over MIRK methods for the inverse problem.

Furthermore, although the higher-order MIRK methods we suggest to use in this paper are not symplectic and thus lack general energy preservation guarantees, we see from Figure \ref{fig:integration} that they may still preserve the energy within a bound for specific problems. In fact, for the double pendulum problem considered here, the non-symplectic MIRK4 method preserves the energy slightly better than the symplectic MIMP4 scheme up to time $T=500$. The invariant-preserving discrete gradient method preserves the Hamiltonian to machine precision.
\begin{figure}[!htb]
    \centering
    \includegraphics[width=0.47\textwidth]{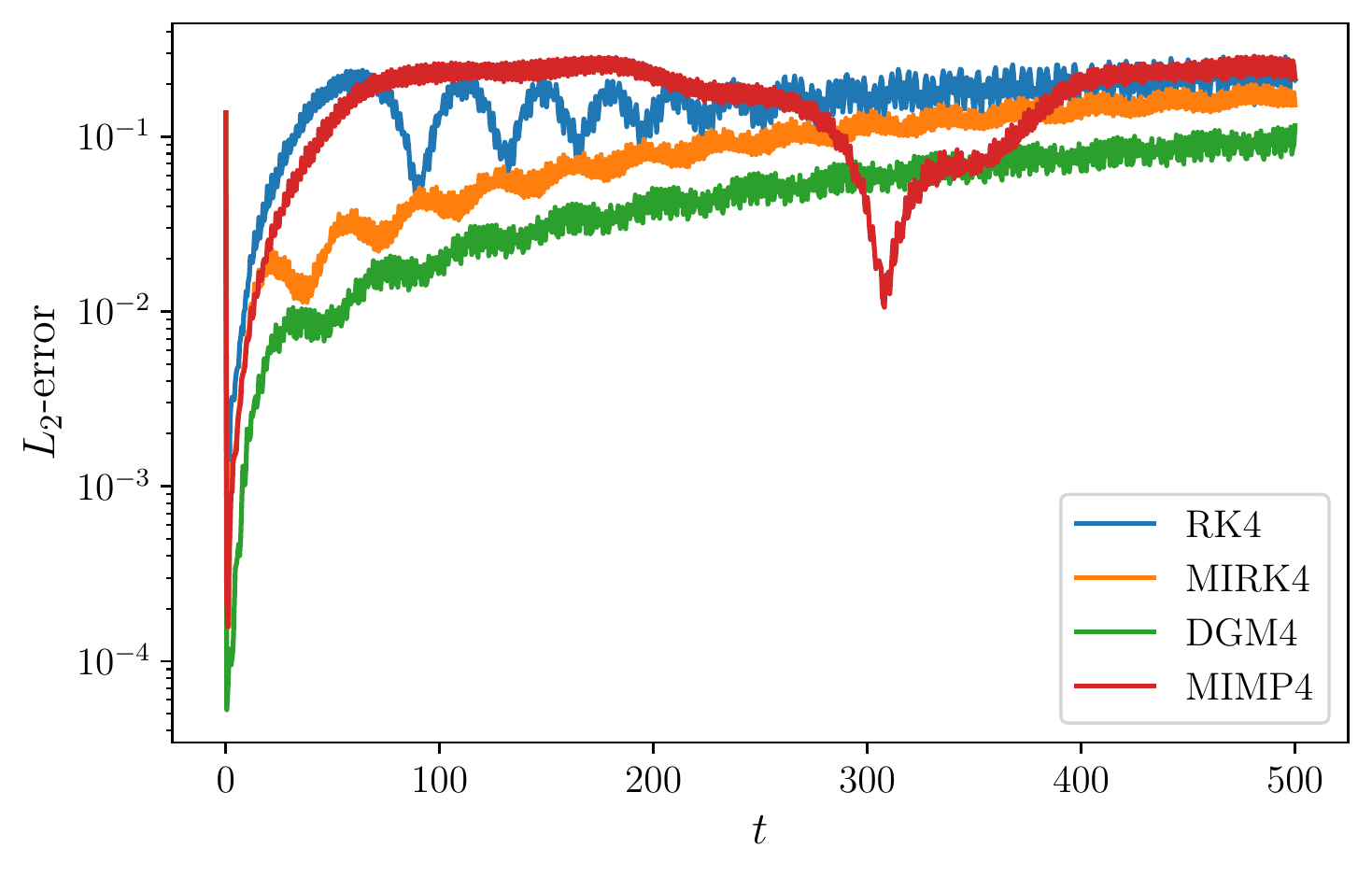}
    \includegraphics[width=0.47\textwidth]{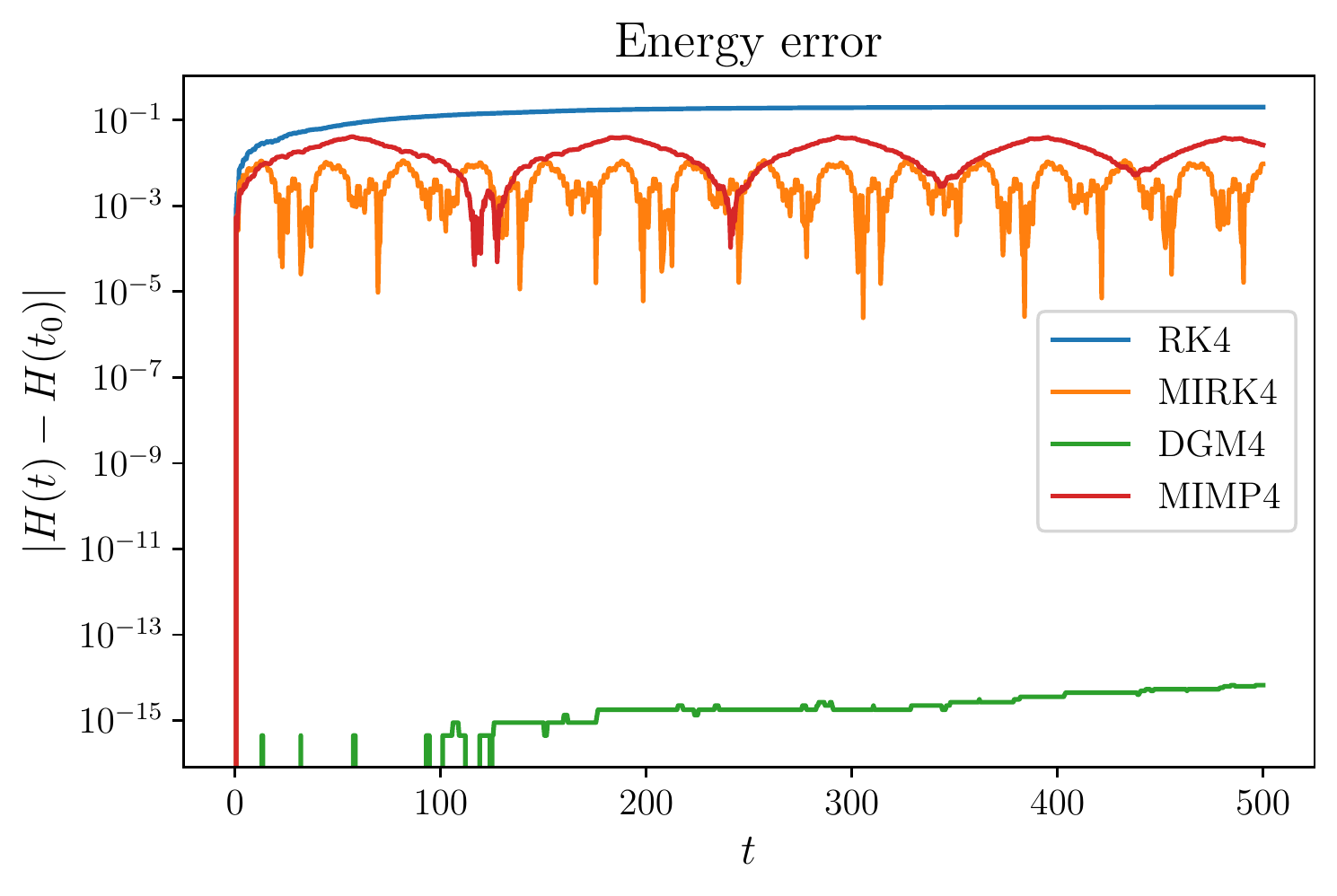}
    \caption{Global error (\textit{left}) and energy error (\textit{right}) of the solution of the double pendulum problem obtained using four different integrators. The initial condition is $y_0=[0.1,0.3,-0.4,0.2]^T$, and the step size for all integrators is $h=\frac{1}{2}$.}
    \label{fig:integration}
\end{figure}

\section{Computational cost}\label{appendix:comp_cost}

The fourth-order MIRK method from Table \ref{example_mirk} (MIRK4) is between twice and thrice as expensive as the implicit midpoint method, depending on the training strategy. That is, if no batching is performed and $f$ is evaluated at all points in the training set at each iteration of the optimization, then the number of function evaluations for a trajectory with $n$ points is $n-1$ for the implicit midpoint method and $2n-1$ for MIRK4. However, if batching is done and function evaluations cannot generally be reused for successive points, the total number of function evaluations at each epoch may increase to $3n-3$ for MIRK4.

In general, the cost of an $s$-stage MIRK method depends on both the training strategy and whether the end points $y$ and $\hat{y}$ are two of the stages. If batching is not done and $y$ and $\hat{y}$ are two of the stages, then computational cost at each epoch is $\mathcal{O}\left(m (n + (s-2)(n-1)) \right)$, where $m$ is the number of trajectories of $n$ points in each. The maximum cost with batching is the same as the cost if $y$ and $\hat{y}$ are not two of the stages: $\mathcal{O}\left(m s (n-1) ) \right)$. This cost is equivalent to that of an explicit $s$-stage RK method.

\end{document}